\def\eqref#1{equation~\ref{#1}}
\def\1{\bm{1}}
\DeclareMathAlphabet{\mathsfit}{\encodingdefault}{\sfdefault}{m}{sl}
\SetMathAlphabet{\mathsfit}{bold}{\encodingdefault}{\sfdefault}{bx}{n}
\newcommand{\itm}[1]{\textrm{\textit{#1}}}                  
\newcommand{\boldz}{\bm{z}}                                 
\newcommand{\boldx}{\bm{x}}                                 
\newcommand{\boldeps}{\bm{\epsilon}}                        
\newcommand{\boldsigma}{\bm{\sigma}}                        
\newcommand{\boldtheta}{\bm{\theta}}                        
\newcommand{\boldmu}{\bm{\mu}}                              
\newcommand{\gen}{G}                                        
\newcommand{\up}{$\uparrow$}                                
\newcommand{\down}{$\downarrow$}                            
\newcommand{\psnr}{PSNR}                                    
\newcommand{\dalle}[0]{DALL$\cdot$E\xspace}                 
\newcommand{\cyclediff}[0]{CycleDiffusion\xspace}           
\definecolor{mdgreen}{rgb}{0.05,0.6,0.05}
\definecolor{mdblue}{rgb}{0,0,0.7}
\definecolor{dkblue}{rgb}{0,0,0.5}
\definecolor{dkgray}{rgb}{0.3,0.3,0.3}
\definecolor{slate}{rgb}{0.25,0.25,0.4}
\definecolor{gray}{rgb}{0.5,0.5,0.5}
\definecolor{ltgray}{rgb}{0.7,0.7,0.7}
\definecolor{lavender}{rgb}{0.65,0.55,1.0}
\definecolor{mypurple}{RGB}{111,61,121}
\definecolor{myred}{RGB}{181,68,106}
\definecolor{hanblue}{rgb}{0.27, 0.42, 0.81}                
\newcommand{\ddpmcolor}[1]{\textcolor{red}{#1}}          
\newcommand{\ddimcolor}[1]{\textcolor{brown}{#1}}          
\newcommand{\latentcolor}[1]{\textcolor{hanblue}{#1}}       
\newcommand{\imagecolor}[1]{\textcolor{purple}{#1}}         
\newcommand{\controlcolor}[1]{\textcolor{orange}{#1}}       
\newcommand{\baby}[0]{
$\big\{$\raisebox{-.4em}{\includegraphics[width=.03\textwidth]{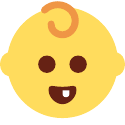}}\raisebox{-.4em}{\includegraphics[width=.03\textwidth]{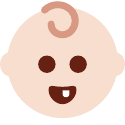}} $\cdots\big\}$}  
\newcommand{\oldperson}[0]{
$\big\{$\raisebox{-.4em}{\includegraphics[width=.03\textwidth]{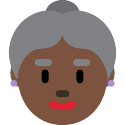}}\raisebox{-.4em}{\includegraphics[width=.03\textwidth]{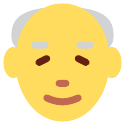}} $\cdots\big\}$}  
\newcommand{\eyeglasses}[0]{
\raisebox{-.45em}{\includegraphics[width=.035\textwidth]{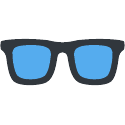}}\xspace}  
\newcommand{\yellowhat}[0]{
\raisebox{-.45em}{\includegraphics[width=.035\textwidth]{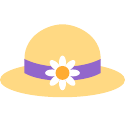}}\xspace}  
\renewcommand{\cite}[1]{\citep{#1}}                         
\newcommand{\anonymoustext}[1]{}                          
\newcommand{\acceptedtext}[1]{#1}                             
\newcommand{\opensourceunify}{\url{https://github.com/ChenWu98/unified-generative-zoo}}   
\newcommand{\opensourcezero}{\url{https://github.com/ChenWu98/cycle-diffusion}}   
\title{
Unifying Diffusion Models' Latent Space, with Applications to \cyclediff and Guidance 
}
\author{Chen Henry Wu, \ \ Fernando De la Torre \\
Robotics Institute, Carnegie Mellon University \\
\texttt{\{chenwu2,ftorre\}@cs.cmu.edu} \\
}
\begin{document}

\maketitle

\begin{abstract}
    Diffusion models have achieved unprecedented performance in generative modeling. The commonly-adopted formulation of the latent code of diffusion models is a sequence of gradually denoised samples, as opposed to the simpler (e.g., Gaussian) latent space of GANs, VAEs, and normalizing flows. This paper provides an alternative, Gaussian formulation of the latent space of diffusion models, as well as a reconstructable \textbf{DPM-Encoder} that maps images into the latent space. While our formulation is purely based on the definition of diffusion models, we demonstrate several intriguing consequences. (1) Empirically, we observe that a common latent space emerges from two diffusion models trained independently on related domains. In light of this finding, we propose \textbf{\cyclediff}, which uses DPM-Encoder for unpaired image-to-image translation. Furthermore, applying \cyclediff to text-to-image diffusion models, we show that large-scale text-to-image diffusion models can be used as \textbf{zero-shot} image-to-image editors. (2) One can guide pre-trained diffusion models and GANs by controlling the latent codes in a unified, plug-and-play formulation based on energy-based models. Using the CLIP model and a face recognition model as guidance, we demonstrate that diffusion models have better coverage of low-density sub-populations and individuals than GANs.\footnote{\anonymoustext{Our codes will be publicly available.}\acceptedtext{The code is publicly available at \url{https://github.com/ChenWu98/cycle-diffusion}.}} 
\end{abstract}

\section{Introduction}

Diffusion models~\cite{SongE19,HoJA20} have achieved unprecedented results in generative modeling and are instrumental to text-to-image models such as \dalle 2 \cite{Ramesh2022HierarchicalTI}. Unlike GANs~\cite{Goodfellow2014GenerativeAN}, VAEs \cite{KingmaW13}, and normalizing flows \cite{Dinh2014NICE}, which have a simple (e.g., Gaussian) latent space, the commonly-adopted formulation of the ``latent code'' of diffusion models is a sequence of gradually denoised images. This formulation makes the prior distribution of the ``latent code'' data-dependent, deviating from the idea that generative models are mappings from simple noises to data \cite{Goodfellow2014GenerativeAN}. 

This paper provides a unified view of generative models of images
by reformulating various diffusion models as deterministic maps from a Gaussian latent code $\latentcolor{\boldz}$ to an image $\imagecolor{\boldx}$ (Figure~\ref{fig:conversion}, Section~\ref{subsec:unify}). 
A question that follows is \textit{encoding}: how to map an image $\imagecolor{\boldx}$ to a latent code $\latentcolor{\boldz}$. Encoding has been studied for many generative models. For instance, VAEs and normalizing flows have encoders by design, GAN inversion \cite{Xia2021GANIA} builds \textit{post hoc} encoders for GANs, and deterministic diffusion probabilistic models (DPMs) \cite{song2021denoising,0011SKKEP21} build encoders with forward ODEs. However, it is still unclear how to build an encoder for stochastic DPMs such as DDPM \cite{HoJA20}, non-deterministic DDIM \cite{song2021denoising}, and latent diffusion models \cite{Rombach2021ldm}. We propose \textbf{DPM-Encoder} (Section~\ref{subsec:encoding}), a reconstructable encoder for stochastic DPMs. 

We show that some intriguing consequences emerge from our definition of the latent space of diffusion models and our DPM-Encoder. First, observations have been made that, given two diffusion models, a fixed ``random seed'' produces similar images \cite{Nichol2021GLIDETP}. 
Under our formulation, we formalize ``similar images'' via an upper bound of image distances. Since the defined latent code contains all randomness during sampling, DPM-Encoder is similar-in-spirit to inferring the ``random seed'' from real images. Based on this intuition and the upper bound of image distances, we propose \textbf{\cyclediff} (Section~\ref{subsec:cyclediff-method}), a method for unpaired image-to-image translation using our DPM-Encoder. 
Like the GAN-based UNIT method \cite{LiuBK17}, \cyclediff encodes and decodes images using the common latent space. Our experiments show that \cyclediff outperforms previous methods based on GANs or diffusion models (Section~\ref{subsec:align}). Furthermore, by applying large-scale text-to-image diffusion models (e.g., Stable Diffusion; \citealp{Rombach2021ldm}) to \cyclediff, we obtain \textbf{zero-shot} image-to-image editors (Section~\ref{subsec:dalle}). 

\begin{figure}[!t]
\centering
  \includegraphics[width=0.98\linewidth]{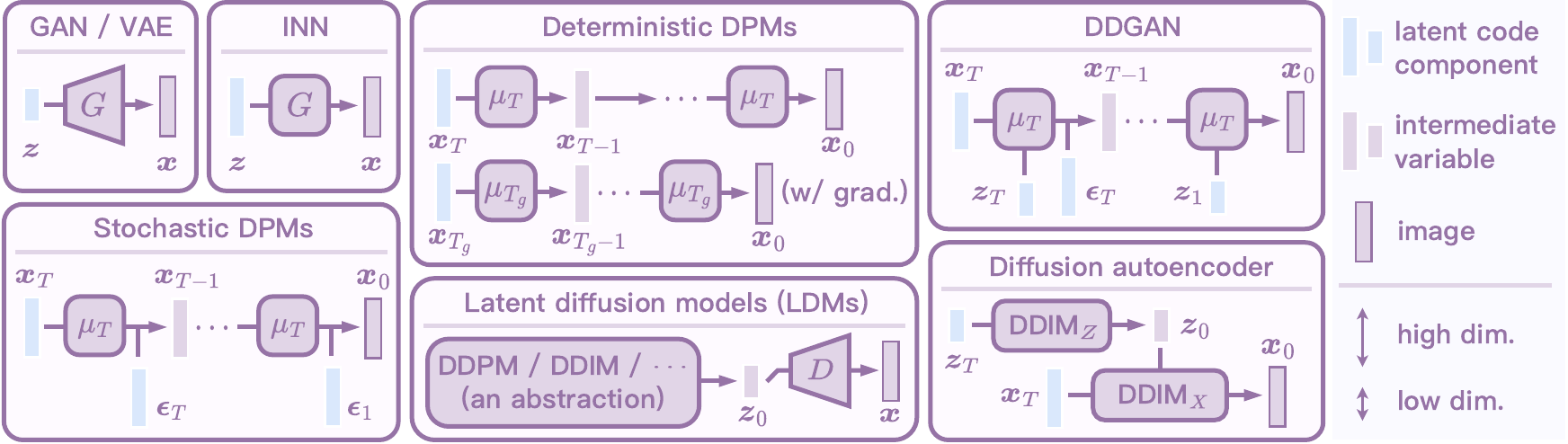}
\caption{\label{fig:conversion} Once trained, various types of diffusion models can be reformulated as deterministic maps from latent code $\latentcolor{\boldz}$ to image $\imagecolor{\boldx}$, like GANs, VAEs, and normalizing flows. }
\end{figure}

With a simple latent prior, generative models can be guided in a plug-and-play manner by means of energy-based models \cite{Nguyen2017PlugP,nie2021controllable,promptgen}. Thus, our unification allows unified, plug-and-play guidance for various diffusion models and GANs (Section~\ref{subsec:control}), which avoids finetuning the guidance model on noisy images for diffusion models \cite{dhariwal2021diffusion,Liu2021MoreCF}. 
With the CLIP model and a face recognition model as guidance, we show that diffusion models have broader coverage of low-density sub-populations and individuals (Section~\ref{subsec:control-experiment}).

\section{Related Work}
\label{sec:related-work}

Recent years have witnessed a great progress in generative models, such as GANs \cite{Goodfellow2014GenerativeAN}, diffusion models \cite{SongE19,HoJA20,dhariwal2021diffusion}, VAEs \cite{KingmaW13}, normalizing flows \cite{Dinh2014NICE}, and their hybrid extensions \cite{Sinha2021D2CDM,Vahdat2021ScorebasedGM,Zhang2021DiffusionNF,Kim2022MaximumLT}. 
Previous works have shown that their training objectives are related, e.g., diffusion models as VAEs \cite{HoJA20,Kingma2021VariationalDM,Huang2021AVP}; GANs and VAEs as KL divergences \cite{hu2018on} or mutual information with consistency constraints \cite{Zhao2018TheIF}; a recent attempt \cite{Zhang2022UnifyingGM} has been made to unify several generative models as GFlowNets \cite{Bengio2021GFlowNetF}. In contrast, this paper unifies generative models as deterministic mappings from Gaussian noises to data (\textit{aka} implicit models) once they are trained. Generative models with non-Gaussian randomness \cite{Davidson2018HypersphericalVA,Nachmani2021NonGD} can be unified as deterministic mappings in similar ways.

One of the most fundamental challenges in generative modeling is to design an encoder that is both computationally efficient and invertible. GAN inversion trains an encoder after GANs are pre-trained \cite{Xia2021GANIA}. VAEs and normalizing flows have their encoders by design. \citet{song2021denoising,0011SKKEP21} studied encoding for ODE-based deterministic diffusion probabilistic models (DPMs). However, it remains unclear how to encode for general stochastic DPMs, and DPM-Encoder fills this gap. 
Also, \cyclediff can be seen as an extension of \citet{Su2022DualDI}'s DDIB approach to stochastic DPMs. 

Previous works have formulated plug-and-play guidance of generative models as latent-space energy-based models (EBMs) \cite{Nguyen2017PlugP,nie2021controllable,promptgen}, and our unification makes it applicable to various diffusion models, which are effective for modeling images, audio \cite{Kong2021DiffWaveAV}, videos \cite{Ho2022VideoDM,Hoppe2022DiffusionMF}, molecules \cite{xu2022geodiff}, 3D objects \cite{Luo2021DiffusionPM}, and text \cite{Li2022DiffusionLMIC}.
This plug-and-play guidance can provide principled, fine-grained model comparisons of coverage of sub-populations and individuals on the same dataset. 

A concurrent work observed that fixing both (1) the random seed and (2) the cross-attention map in Transformer-based text-to-image diffusion models results in images with minimal changes \cite{Hertz2022PrompttoPromptIE}. The idea of fixing the cross-attention map is named Cross Attention Control (CAC) in that work, which can be used to edit \textit{model-generated} images \textit{when the random seed is known}. For \textit{real} images with stochastic DPMs, they generate masks based on the attention map because the random seed is unknown for real images. In Section~\ref{subsec:dalle}, we show that \cyclediff and CAC can be combined to improve the structural preservation of image editing.

\begin{table}[!th]
	\centering
	\caption{Details of redefining various diffusion models' latent space (Section~\ref{subsec:unify}).
	}\smallskip
	\label{tab:diffusion-unify}
	\begin{adjustbox}{width=\linewidth}
		\begin{tabular}{@{}lcc@{}}
			\toprule
			& Latent code $\latentcolor{\boldz}$ & Deterministic map $\imagecolor{\boldx} = \gen(\latentcolor{\boldz})$ \\
			\midrule
			\multirow{2}*{Stochastic DPMs} & \multirow{2}*{$\latentcolor{\boldz := \big(\boldx_T \oplus \boldeps_T \oplus \cdots \oplus \boldeps_{1}\big)}$} & $\boldx_{T-1} = \boldmu_{T}(\latentcolor{\boldx_{T}}, T) + \boldsigma_T \odot \latentcolor{\boldeps_T},$ \\
			& & $\boldx_{t-1} = \boldmu_{T}(\boldx_{t}, t) + \boldsigma_t \odot \latentcolor{\boldeps_t} \ (t < T), \quad \imagecolor{\boldx := \boldx_0}$. \\
			\midrule
			\multirow{2}*{Deterministic DPMs} & \multirow{2}*{$\latentcolor{\boldz := \boldx_{T}}$ ($T = T_{\textit{g}}$ if with gradient) } & $\boldx_{T-1} = \boldmu_{T}(\latentcolor{\boldx_{T}}, T), $ \\
			& & $\boldx_{t-1} = \boldmu_{T}(\boldx_{t}, t) \ (t < T), \quad \imagecolor{\boldx := \boldx_0}. $ \\
			\midrule
            LDM    & $\latentcolor{\boldz}$ of $\gen_{\text{latent}}$ & $\boldz_0 = \gen_{\text{latent}}(\latentcolor{\boldz}), \quad \imagecolor{\boldx} = D(\boldz_0)$. \\
            \midrule
            DiffAE & $\latentcolor{\boldz := \big(\boldz_T \oplus \boldx_T\big)}$ & $\boldz_0 = \text{DDIM}_{Z} (\latentcolor{\boldz_T}), \quad \imagecolor{\boldx := \boldx_0} = \text{DDIM}_{X} (\latentcolor{\boldx_T}, \boldz_0)$. \\
            \midrule
            \multirow{3}*{DDGAN}  & \multirow{2}*{$\latentcolor{\boldz := \big(\boldx_T \oplus \boldz_T \oplus \boldeps_T \oplus \cdots}$} & $\boldx_{T-1} = \boldmu_{T}(\latentcolor{\boldx_{T}}, \latentcolor{\boldz_T}, T) + \boldsigma_T \odot \latentcolor{\boldeps_{T}}, $  \\
            & \multirow{2}*{$\latentcolor{\oplus \ \boldz_{2} \oplus \boldeps_{2} \oplus \boldz_{1}\big)}$} & $\boldx_{t-1} = \boldmu_{T}(\boldx_{t}, \latentcolor{\boldz_t}, t) + \boldsigma_t \odot \latentcolor{\boldeps_{t}} \ (1 < t < T), $ \\
            & & $\imagecolor{\boldx := \boldx_{0}} = \boldmu_{T}(\boldx_{1}, \latentcolor{\boldz_1}, 1).$ \\
			\bottomrule
		\end{tabular}
	\end{adjustbox}
\end{table}

\section{Method}
\label{sec:method}

\subsection{Gaussian Latent Space for Diffusion Models}
\label{subsec:unify}

Generative models such as GANs, VAEs, and normalizing flows can be seen as a family of \textit{implicit models}, meaning that they are deterministic maps $\gen: \mathbb{R}^{d} \to \mathcal{X}$ from latent codes $\latentcolor{\boldz}$ to images $\imagecolor{\boldx}$. At inference, sampling from the image prior $\imagecolor{\boldx} \sim p_{\imagecolor{\boldx}}(\imagecolor{\boldx})$ is implicitly defined as $\latentcolor{\boldz} \sim p_{\latentcolor{\boldz}}(\latentcolor{\boldz}), \imagecolor{\boldx} = \gen(\latentcolor{\boldz})$. The latent prior $p_{\latentcolor{\boldz}}(\latentcolor{\boldz})$ is commonly chosen to be the isometric Gaussian distribution. In this section, we show how to unify diffusion models into this family. Overview is shown in Figure~\ref{fig:conversion} and Table~\ref{tab:diffusion-unify}. 

\noindent\textbf{Stochastic DPMs:} Stochastic DPMs \cite{HoJA20,SongE19,0011SKKEP21,song2021denoising,Watson2022LearningFS} generate images with a Markov chain. Given the mean estimator $\boldmu_{T}$ (see Appendix~\ref{app:math-details}) and $\latentcolor{\boldx_{T}} \sim \mathcal{N}(\bm{0}, \bm{I})$, the image \imagecolor{$\boldx := \boldx_0$} is generated through $\boldx_{t-1} \sim \mathcal{N}(\boldmu_{T}(\boldx_{t}, t), \mathrm{diag}(\boldsigma_t^2))$.
Using the reparameterization trick, we define the latent code $\latentcolor{\boldz}$ and the mapping $\gen$ recursively as
\begin{equation}
\label{eq:diffusion-as-implicit-latent}
\begin{split}
    &\latentcolor{\boldz := \big(\boldx_T \oplus \boldeps_T \oplus \cdots \oplus \boldeps_{1}\big)} \sim \mathcal{N}(\bm{0}, \bm{I}), \ \ \ \boldx_{t-1} = \boldmu_{T}(\boldx_{t}, t) + \boldsigma_t \odot \latentcolor{\boldeps_t}, \ \ \ t = T, \ldots, 1,
\end{split}
\end{equation}
where $\oplus$ is concatenation. Here, $\latentcolor{\boldz}$ has dimension $d = d_I \times (T + 1)$, where $d_I$ is the image dimension.

\noindent\textbf{Deterministic DPMs: \ } Deterministic DPMs \cite{song2021denoising,0011SKKEP21,Salimans2022ProgressiveDF,Liu2022PseudoNM,Lu2022DPMSolverAF,Karras2022ElucidatingTD,Zhang2022FastSO} generate images with the ODE formulation. Given the mean estimator $\boldmu_{T}$, deterministic DPMs generate \imagecolor{$\boldx := \boldx_0$} via
\begin{align}
\label{eq:ddim}
    \latentcolor{\boldz := \boldx_{T}} \sim \mathcal{N}(\bm{0}, \bm{I}), \quad \boldx_{t-1} = \boldmu_{T}(\boldx_{t}, t), \quad t = T, \ldots, 1.
\end{align}
Since backpropagation through Eq.~(\ref{eq:ddim}) is costly, we use fewer discretization steps $T_{\textit{g}}$ when computing gradients. Given the mean estimator $\boldmu_{T_{\textit{g}}}$ with number of steps $T_{\textit{g}}$, the image \imagecolor{$\boldx := \boldx_0$} is generated as
\begin{align}
\label{eq:ddim-gradient}
    &\text{(with gradients)\quad} \latentcolor{\boldz := \boldx_{T_{\textit{g}}}} \sim \mathcal{N}(\bm{0}, \bm{I}), \quad \boldx_{t-1} = \boldmu_{T_{\textit{g}}}(\boldx_{t}, t), \quad t = T_{\textit{g}}, \ldots, 1.
\end{align}
\noindent\textbf{Latent diffusion models (LDMs):} An LDM \cite{Rombach2021ldm} first uses a diffusion model $\gen_{\text{latent}}$ to compute a ``latent code'' $\boldz_0 = \gen_{\text{latent}}(\latentcolor{\boldz})$,\footnote{\label{footnote:latent-code}Quotation marks stand for ``latent code'' in the cited papers, different from our latent code $\latentcolor{\boldz}$ in Section~\ref{subsec:unify}. } which is then decoded as $\imagecolor{\boldx} = D(\boldz_0)$. Note that $\gen_{\text{latent}}$ is an abstraction of the diffusion models that are already unified above.

\noindent\textbf{Diffusion autoencoder (DiffAE):} DiffAE \cite{Preechakul2021DiffusionAT} first uses a deterministic DDIM to generate a ``latent code'' $\boldz_0$,\footref{footnote:latent-code} which is used as condition for an image-space deterministic DDIM:
\begin{equation}
\label{eq:diffae}
\begin{split}
    & \latentcolor{\boldz := \big(\boldz_T \oplus \boldx_T\big)} \sim \mathcal{N}(\bm{0}, \bm{I}), \quad \boldz_0 = \text{DDIM}_{Z} (\latentcolor{\boldz_T}), \quad \imagecolor{\boldx := \boldx_0} = \text{DDIM}_{X} (\latentcolor{\boldx_T}, \boldz_0).
\end{split}
\end{equation}
\noindent\textbf{DDGAN: } DDGAN \cite{xiao2022tackling} models each reverse time step $t$ as a GAN conditioned on the output of the previous step. We define the latent code $\latentcolor{\boldz}$ and generation process $\gen$ of DDGAN as 
\begin{equation}
\label{eq:ddgan}
\begin{split}
    &\latentcolor{\boldz := \big(\boldx_T \oplus \boldz_T \oplus \boldeps_T \oplus \cdots \oplus \boldz_{2} \oplus \boldeps_{2} \oplus \boldz_{1}\big)} \sim \mathcal{N}(\bm{0}, \bm{I}), \\
    &\boldx_{t-1} = \boldmu_{T}(\boldx_{t}, \latentcolor{\boldz_t}, t) + \boldsigma_t \odot \latentcolor{\boldeps_{t}}, \quad t = T, \ldots, 2, \quad \imagecolor{\boldx := \boldx_{0}} = \boldmu_{T}(\boldx_{1}, \latentcolor{\boldz_1}, 1).
\end{split}
\end{equation}

\begin{algorithm}[!ht]
\DontPrintSemicolon
\textbf{Input:} source image $\imagecolor{\boldx := \boldx_0}$; source text $\bm{t}$; target text $\hat{\bm{t}}$; encoding step $T_{\text{es}} \leq T$\;
1. Sample noisy image $\hat{\boldx}_{T_{\text{es}}} = \boldx_{T_{\text{es}}} \sim q(\boldx_{T_{\text{es}}}|\imagecolor{\boldx_0})$\;
\For{$t = T_{\text{es}}, \ldots, 1$}{
    2. $\boldx_{t-1} \sim q(\boldx_{t-1}|\boldx_t, \imagecolor{\boldx_0})$\;
    3. $\latentcolor{\boldeps_t} = \big(\boldx_{t-1} - \boldmu_{T}(\boldx_{t}, t | \bm{t})\big) / \boldsigma_t$\;
    4. $\hat{\boldx}_{t-1} = \boldmu_{T}(\hat{\boldx}_{t}, t | \hat{\bm{t}}) + \boldsigma_t \odot \latentcolor{\boldeps_t}$\;
}
\textbf{Output: } $\imagecolor{\hat{\boldx} := \hat{\boldx}_0}$\;
	\caption{\cyclediff for zero-shot image-to-image translation }
	\label{alg:cycle-diffusion-early-stop-text}
\end{algorithm}

\subsection{DPM-Encoder: A Reconstructable Encoder for Diffusion Models}
\label{subsec:encoding}

In this section, we investigate the \textit{encoding} problem, i.e., $\latentcolor{\boldz} \sim \mathrm{Enc}(\latentcolor{\boldz}|\imagecolor{\boldx}, \gen)$. The encoding problem has been studied for many generative models, and our contribution is \textbf{DPM-Encoder}, an encoder for stochastic DPMs. DPM-Encoder is defined as follows. For each image \imagecolor{$\boldx := \boldx_0$}, stochastic DPMs define a posterior distribution $q(\boldx_{1:T}|\imagecolor{\boldx_0})$ \cite{HoJA20,song2021denoising}. Based on $q(\boldx_{1:T}|\imagecolor{\boldx_0})$ and Eq.~(\ref{eq:diffusion-as-implicit-latent}), we can directly derive $\latentcolor{\boldz} \sim \mathrm{DPMEnc}(\latentcolor{\boldz}|\imagecolor{\boldx}, \gen)$ as (see details in Appendices~\ref{app:math-details} and \ref{app:dpm-encoder})
\begin{equation}
\label{eq:dpm-encoder}
\begin{split}
    &\boldx_1, \ldots, \boldx_{T-1}, \latentcolor{\boldx_T} \sim q(\boldx_{1:T}|\imagecolor{\boldx_0}), \quad \latentcolor{\boldeps_t} = \big(\boldx_{t-1} - \boldmu_{T}(\boldx_{t}, t)\big) / \boldsigma_t, \quad t = T, \ldots, 1, \\
    &\quad\quad\quad\quad\quad\quad\quad\quad\quad\quad  \latentcolor{\boldz := \big(\boldx_T \oplus \boldeps_T \oplus \cdots \oplus \boldeps_{2} \oplus \boldeps_{1}\big)}.
\end{split}
\end{equation}
A property of DPM-Encoder is perfect reconstruction, meaning that we have $\imagecolor{\boldx} = \gen(\latentcolor{\boldz})$ for every $\latentcolor{\boldz} \sim \mathrm{Enc}(\latentcolor{\boldz}|\imagecolor{\boldx}, \gen)$. A proof by induction is provided in Appendix~\ref{app:dpm-encoder}.

\subsection{\cyclediff: Image-to-Image Translation with DPM-Encoder}
\label{subsec:cyclediff-method}

Given two stochastic DPMs $\gen_1$ and $\gen_2$ that model two distributions $\mathcal{D}_1$ and $\mathcal{D}_2$, several researchers and practitioners have found that sampling with the same ``random seed'' leads to similar images \cite{Nichol2021GLIDETP}. To formalize ``similar images'', we provide an upper bound of image distances based on assumptions about the trained DPMs, shown at the end of this subsection. Based on this finding, we propose a simple unpaired image-to-image translation method, \cyclediff. Given a source image $\imagecolor{\boldx} \in \mathcal{D}_1$, we use DPM-Encoder to encode it as $\latentcolor{\boldz}$ and then decode it as $\imagecolor{\hat{\boldx}} = \gen_2(\latentcolor{\boldz})$:
\begin{equation}
\label{eq:unit-dpm-encoder}
    \latentcolor{\boldz} \sim \mathrm{DPMEnc}(\latentcolor{\boldz}|\imagecolor{\boldx}, \gen_1), \quad \imagecolor{\hat{\boldx}} = \gen_2(\latentcolor{\boldz}).
\end{equation}

We can also apply \cyclediff to text-to-image diffusion models by defining $\mathcal{D}_1$ and $\mathcal{D}_2$ as image distributions conditioned on two texts. Let $\gen_{\bm{t}}$ be a text-to-image diffusion model conditioned on text $\bm{t}$. Given a source image $\imagecolor{\boldx}$, the user writes two texts: a source text $\bm{t}$ describing the source image $\imagecolor{\boldx}$ and a target text $\hat{\bm{t}}$ describing the target image $\imagecolor{\hat{\boldx}}$ to be generated. 
We can then perform zero-shot image-to-image editing via (zero-shot means that the model has never been trained on image editing) 
\begin{equation}
\label{eq:dalle-dpm-encoder}
    \latentcolor{\boldz} \sim \mathrm{DPMEnc}(\latentcolor{\boldz}|\imagecolor{\boldx}, \gen_{\bm{t}}), \quad \imagecolor{\hat{\boldx}} = \gen_{\hat{\bm{t}}}(\latentcolor{\boldz}).
\end{equation}
Inspired by the realism-faithfulness tradeoff in SDEdit \cite{meng2022sdedit}, we can truncate $\latentcolor{\boldz}$ towards a specified encoding step $T_{\text{es}} \leq T$. The algorithm of \cyclediff is shown in Algorithm~\ref{alg:cycle-diffusion-early-stop-text}. 

\noindent\textbf{An analysis for image similarity with fixed $\latentcolor{\boldz}$.}  \ We analyze the image similarity using text-to-image diffusion models. Suppose the text-to-image model has the following two properties: 

\begin{enumerate}
    \item Conditioned on the same text, similar noisy images lead to similar enough mean predictions. Formally, $\boldmu_{T}(\boldx_{t}, t|\bm{t})$ is $K_t$-Lipschitz, i.e., $\|\boldmu_{T}(\boldx_{t}, t|\bm{t}) - \boldmu_{T}(\hat{\boldx}_{t}, t|\bm{t})\|\leq K_t\|\boldx_{t} - \hat{\boldx}_{t}\|$.
    \item Given the same image, the two texts lead to similar predictions. Formally, $\|\boldmu_{T}(\hat{\boldx}_{t}, t|\bm{t}) - \boldmu_{T}(\hat{\boldx}_{t}, t|\hat{\bm{t}})\| \leq S_t$. Intuitively, a smaller difference between $\bm{t}$ and $\hat{\bm{t}}$ gives us a smaller $S_t$. 
\end{enumerate}
Let $B_{t}$ be the upper bound of $\|\boldx_{t} - \hat{\boldx}_{t}\|_2$ at time step $t$ when the same latent code $\latentcolor{\boldz}$ is used for sampling (i.e., $\imagecolor{\boldx_0} = G_{\bm{t}}(\latentcolor{\boldz})$ and $\imagecolor{\hat{\boldx}_0} = G_{\hat{\bm{t}}}(\latentcolor{\boldz})$). We have $B_{T} = 0$ because $\|\latentcolor{\boldx_T} - \latentcolor{\hat{\boldx}_T}\|_2 = 0$, and $B_{0}$ is the upper bound for the generated images $\|\imagecolor{\boldx} - \imagecolor{\hat{\boldx}}\|_2$. The upper bound $B_{t}$ can be propagated through time, from $T$ to $0$. Specifically, by combining the above two properties, we have 
\begin{equation}
    B_{t-1} \leq (K_t + 1) B_t + S_t. 
\end{equation}

\subsection{Unified Plug-and-Play Guidance for Generative Models}
\label{subsec:control}
Prior works showed that guidance for generative models can be achieved in the latent space \cite{Nguyen2017PlugP,nie2021controllable,promptgen}. Specifically, given a condition $\controlcolor{\mathcal{C}}$, one can define the guided image distribution as an energy-based model (EBM): $p(\imagecolor{\boldx}|\controlcolor{\mathcal{C}}) \propto p_{\imagecolor{\boldx}}(\imagecolor{\boldx})e^{-\lambda_{\controlcolor{\mathcal{C}}}E(\imagecolor{\boldx}|{\controlcolor{\mathcal{C}}})}$.
Sampling for $\imagecolor{\boldx} \sim p(\imagecolor{\boldx}|\controlcolor{\mathcal{C}})$ is equivalent to $\latentcolor{\boldz} \sim p_{\latentcolor{\boldz}}(\latentcolor{\boldz}|\controlcolor{\mathcal{C}}), \imagecolor{\boldx} = \gen(\latentcolor{\boldz})$, where $p(\latentcolor{\boldz}|\controlcolor{\mathcal{C}}) \propto p_{\latentcolor{\boldz}}(\latentcolor{\boldz})e^{-\lambda_{\controlcolor{\mathcal{C}}}E(\gen(\latentcolor{\boldz})|{\controlcolor{\mathcal{C}}})}$.
Examples of the energy function $E(\imagecolor{\boldx}|{\controlcolor{\mathcal{C}}})$ are provided in Section~\ref{subsec:control-experiment}. To sample $\latentcolor{\boldz} \sim p(\latentcolor{\boldz}|\controlcolor{\mathcal{C}})$, one can use any model-agnostic samplers. For example, Langevin dynamics \cite{Welling2011BayesianLV} starts from $\latentcolor{\boldz^{\langle0\rangle}} \sim \mathcal{N}(\bm{0}, \bm{I})$ and samples $\latentcolor{\boldz := \boldz^{\langle n\rangle}}$ iteratively through
\begin{equation}
\label{eq:langevin}
\begin{split}
    \latentcolor{\boldz^{\langle k+1\rangle}}
    = \latentcolor{\boldz^{\langle k\rangle}} + \frac{\sigma}{2} \nabla_{\latentcolor{\boldz}} \Big( \log p_{\latentcolor{\boldz}}(\latentcolor{\boldz^{\langle k\rangle}}) -E\big(\gen(\latentcolor{\boldz^{\langle k\rangle}}) | {\controlcolor{\mathcal{C}}}\big) \Big) + \sqrt{\sigma}\bm{\omega}^{\langle k\rangle}, \quad \bm{\omega}^{\langle k\rangle} \sim \mathcal{N}(\bm{0}, \bm{I}).
\end{split}
\end{equation}

\begin{table}[!th]
    \caption{Quantitative comparison for unpaired image-to-image translation methods. Methods in the second block use the same pre-trained diffusion model in the target domain. Results of CUT, ILVR, SDEdit, and EGSDE are from \citet{Zhao2022EGSDEUI}. Best results using diffusion models are in \textbf{bold}. \cyclediff has the best FID and KID among all methods and the best SSIM among methods with diffusion models. Note that it has been shown that SSIM is much better correlated with human visual perception than squared distance-based metrics such as $L_2$ and \psnr~\cite{Wang2004ImageQA}. }\smallskip
    \label{tab:translate-results}
    \centering
    \begin{adjustbox}{width=\linewidth}
    \begin{tabular}{@{}l@{}cccccccc@{}}
        \toprule
        & \multicolumn{4}{c}{Cat $\rightarrow$ Dog} & \multicolumn{4}{c}{Wild $\rightarrow$ Dog} \\
        \cmidrule(r){2-5} \cmidrule(l){6-9}
        & FID\down & KID$\times10^{3}$\down & \psnr\up & SSIM\up & FID\down & KID$\times10^{3}$\down & \psnr\up & SSIM\up \\
        \midrule
        CUT (GAN SOTA; \citealp{Park2020ContrastiveLF}) \ \ & 76.21     & --        & 17.48     & 0.601     & 92.94     & --        & 17.20     & 0.592 \\
        \midrule
        ILVR \cite{Choi2021ILVRCM}                          & 74.37     & --        & 17.77     & 0.363     & 75.33     & --        & 16.85     & 0.287 \\
        SDEdit \cite{meng2022sdedit}                        & 74.17     & --        & 19.19     & 0.423     & 68.51     & --        & 17.98     & 0.343 \\
        EGSDE \cite{Zhao2022EGSDEUI}                        & 65.82     & --        & \bf 19.31 & 0.415     & 59.75     & --        & \bf 18.14 & 0.343 \\
        \cyclediff w/ DDIM ($\eta = 0.1$)                   & \bf 58.87 & 20.3      & 18.50     & \bf 0.557 & \bf 56.45 & 19.5      & 17.82     & \bf 0.479 \\
        \bottomrule
    \end{tabular}
    \end{adjustbox}
\end{table}

\section{Experiments}
\label{sec:experiments}

This section provides experimental validation of the proposed work. Section~\ref{subsec:align} shows how \cyclediff achieves competitive results on unpaired image-to-image translation benchmarks. Section~\ref{subsec:dalle} provides a protocol for what we call zero-shot image-to-image translation; \cyclediff outperforms several image-to-image translation baselines that we re-purposed for this new task.
Section~\ref{subsec:control-experiment} shows how diffusion models and GANs can be guided in a unified, plug-and-play formulation. 

\subsection{\cyclediff for Unpaired Image-to-Image Translation}
\label{subsec:align}

Given two unaligned image domains, unpaired image-to-image translation aims at mapping images in one domain to the other. We follow setups from previous works whenever possible, as detailed below. 
Following previous work \cite{Park2020ContrastiveLF,Zhao2022EGSDEUI}, we conducted experiments on the test set of AFHQ \cite{choi2020starganv2} with resolution $256 \times 256$ for Cat $\rightarrow$ Dog and Wild $\rightarrow$ Dog. For each source image, each method should generate a target image with minimal changes. Since \cyclediff sometimes generates noisy outputs, we used $T_{\text{sdedit}}$ steps of SDEdit for denoising. When $T = 1000$, we set $T_{\text{sdedit}} = 100$ for Cat $\rightarrow$ Dog and $T_{\text{sdedit}} = 125$ for Wild $\rightarrow$ Dog. 

\noindent\textbf{Metrics: } To evaluate realism, we reported Frechet Inception Distance (FID; \citealp{Heusel2017GANsTB}) and Kernel Inception Distance (KID; \citealp{bikowski2018demystifying}) between the generated and target images. To evaluate faithfulness, we reported Peak Signal-to-Noise Ratio (\psnr)~and Structural Similarity Index Measure (SSIM; \citealp{Wang2004ImageQA}) between each generated image and its source image. 

\noindent\textbf{Baselines: } We compared \cyclediff with previous state-of-the-art unpaired image-to-image translation methods: CUT \cite{Park2020ContrastiveLF}, ILVR \cite{Choi2021ILVRCM}, SDEdit \cite{meng2022sdedit}, and EGSDE \cite{Zhao2022EGSDEUI}. CUT is based on GAN, and the others use diffusion models.  

\noindent\textbf{Pre-trained diffusion models: } ILVR, SDEdit, and EGSDE only need the diffusion model trained on the target domain, and we followed them to use the pre-trained model from \citet{Choi2021ILVRCM} for Dog. \cyclediff needs diffusion models on both domains, so we trained them on Cat and Wild.

Seen in Table~\ref{tab:translate-results} are the results. \cyclediff has the best realism (i.e., FID and KID). There is a mismatch between the faithfulness metrics (i.e., \psnr~and SSIM), and note that SSIM is much better correlated with human perception than \psnr~\cite{Wang2004ImageQA}.
Among all diffusion model-based methods, \cyclediff achieves the highest SSIM. Figure~\ref{fig:cyclediff-comparison} displays some image samples from \cyclediff, showing that our method can change the domain while preserving local details such as the background, lighting, pose, and overall color the animal.

\begin{figure}[!ht]
\centering
    \includegraphics[width=\linewidth]{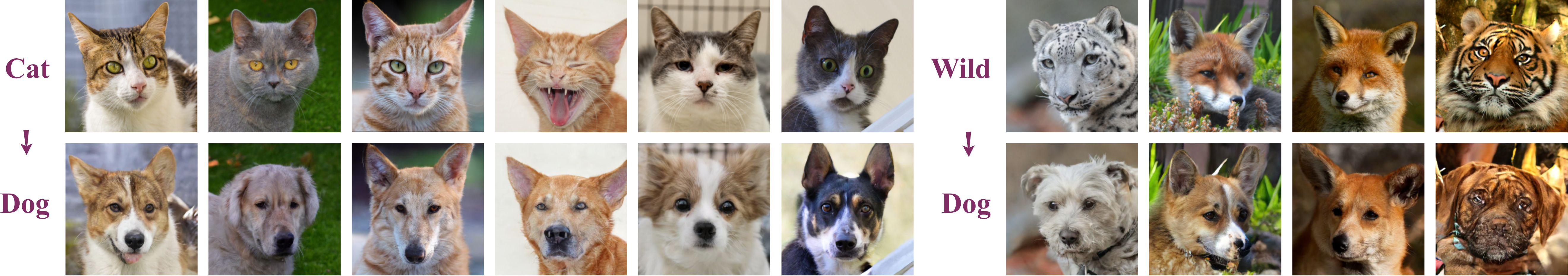}
\caption{\label{fig:cyclediff-comparison} Unpaired image-to-image translation (Cat $\rightarrow$ Dog, Wild $\rightarrow$ Dog) with \cyclediff. } 
\end{figure}

\subsection{Text-to-Image Diffusion Models Can Be Zero-Shot Image-to-Image Editors}
\label{subsec:dalle}

This section provides experiments for zero-shot image editing. 
We curated a set of 150 tuples $(\imagecolor{\boldx}, \bm{t}, \hat{\bm{t}})$ for this task, where $\imagecolor{\boldx}$ is the source image, $\bm{t}$ is the source text (e.g., ``an aerial view of autumn scene.'' in Figure~\ref{fig:dalle-experiment} second row on the right), and $\hat{\bm{t}}$ is the target text (e.g., ``an aerial view of winter scene.''). The generated image is denoted as $\imagecolor{\hat{\boldx}}$. We also demonstrate that \cyclediff can be combined with the Cross Attention Control \cite{Hertz2022PrompttoPromptIE} to further preserve the image structure. 

\noindent\textbf{Metrics: } To evaluate the faithfulness of the generated image to the source image, we reported \psnr~and SSIM. To evaluate the authenticity of the generated image to the target text, we reported the CLIP score $\mathcal{S}_{\text{CLIP}}(\imagecolor{\hat{\boldx}} | \hat{\bm{t}}) =  \cos\big\langle\text{CLIP}_{\text{img}}(\imagecolor{\hat{\boldx}}), \text{CLIP}_{\text{text}}(\hat{\bm{t}})\big\rangle$, where the CLIP embeddings are normalized. We note a trade-off between \psnr/SSIM and $\mathcal{S}_{\text{CLIP}}$: by copying the source image we get high \psnr/SSIM but low $\mathcal{S}_{\text{CLIP}}$, and by ignoring the source image (e.g., by directly generating images conditioned on the target text) we get high $\mathcal{S}_{\text{CLIP}}$ but low \psnr/SSIM. 
To address this trade-off, we also reported the directional CLIP score \cite{Patashnik2021StyleCLIPTM} (the CLIP embeddings are normalized):
\begin{equation}
\label{eq:directional-clip-score}
    \mathcal{S}_{\text{D-CLIP}}(\imagecolor{\hat{\boldx}} | \imagecolor{\boldx}, \bm{t}, \hat{\bm{t}}) =  \cos\Big\langle\text{CLIP}_{\text{img}}(\imagecolor{\hat{\boldx}}) - \text{CLIP}_{\text{img}}(\imagecolor{\boldx}), \text{CLIP}_{\text{text}}(\hat{\bm{t}}) - \text{CLIP}_{\text{text}}(\bm{t}) \Big\rangle.
\end{equation}

\noindent\textbf{Baselines: } The baselines include SDEdit \cite{meng2022sdedit} and DDIB \cite{Su2022DualDI}. We used the same hyperparameters for the baselines and \cyclediff whenever possible (e.g., the number of diffusion steps, the strength of classifier-free guidance; see Appendix~\ref{app:experimental-details}). 

\noindent\textbf{Pre-trained text-to-image diffusion models:} We used the following text-to-image diffusion models models: (1) \texttt{LDM-400M,} a 1.45B-parameter model trained on LAION-400M \cite{Schuhmann2021LAION400MOD}, (2) \texttt{SD-v1-4,} a 0.98B-parameter Stable Diffusion trained on LAION-5B \cite{laion5b}.

\noindent\textbf{Results:} Table~\ref{tab:dalle-results} shows the results for zero-shot image-to-image translation. \cyclediff excels at being faithful to the source image (i.e., \psnr~and SSIM); by contrast, SDEdit and DDIB have comparable authenticity to the target text (i.e., $\mathcal{S}_{\text{CLIP}}$), but their outputs are much less faithful. 
For all methods, we find that the pre-trained weights \texttt{SD-v1-1} and \texttt{SD-v1-4} have better faithfulness than \texttt{LDM-400M}. Figure~\ref{fig:dalle-experiment} provides samples from \cyclediff, demonstrating that \cyclediff achieves meaningful edits that span (1) replacing objects, (2) adding objects, (3) changing styles, and (4) modifying attributes. See Figure~\ref{fig:dalle-experiment-baseline} (Appendix~\ref{app:dalle-details}) for qualitative comparisons with the baselines. 

\noindent\textbf{\cyclediff + Cross Attention Control: } Besides fixing the random seed, \citet{Hertz2022PrompttoPromptIE} shows that fixing the cross attention map (i.e., Cross Attention Control, or CAC) further improves the similarity between synthesized images. CAC is applicable to \cyclediff: in Algorithm~\ref{alg:cycle-diffusion-early-stop-text}, we can apply the attention map of $\boldmu_{T}(\boldx_{t}, t | \bm{t})$ to $\boldmu_{T}(\hat{\boldx_{t}}, t | \hat{\bm{t}})$. However, we cannot apply it to all samples because CAC puts requirements on the difference between $\bm{t}$ and $\hat{\bm{t}}$. Figure~\ref{fig:cyclediff-cac} shows that CAC helps \cyclediff when the intended \textit{structural} change is small. For instance, when the intended change is color but not shape (left), CAC helps \cyclediff preserve the background; when the intended change is horse $\rightarrow$ elephant, CAC makes the generated elephant to look more like a horse in shape.

\begin{table}[!th]
    \caption{\label{tab:dalle-results} Zero-shot image editing. We did not use fixed hyperparameters, and neither did we plot the trade-off curve. The reason is that every input can have its best hyperparameters and even random seed. Instead, \textbf{for each input}, we ran 15 random trials for each hyperparameter and report the one with the highest $\mathcal{S}_{\text{D-CLIP}}$. For a fair comparison, different methods share the same set of combinations of hyperparameters if possible, detailed in Appendix~\ref{app:experimental-details}. }\smallskip
    \centering
    \begin{adjustbox}{width=0.9\linewidth}
    \begin{tabular}{@{}llcccc@{}}
        \toprule
        & Method & $\mathcal{S}_{\text{CLIP}}$\up & $\mathcal{S}_{\text{D-CLIP}}$\up & \psnr\up & SSIM\up \\
        \midrule
        \multirow{3}*{\texttt{LDM-400M}} & SDEdit \cite{meng2022sdedit} & 0.332     & 0.264     & 13.68     & 0.390 \\
        & DDIB \cite{Su2022DualDI}                                      & 0.324     & 0.195     & 15.82     & 0.544 \\
        & \cyclediff w/ DDIM ($\eta = 0.1$; ours)                       & \bf 0.333 & \bf 0.275 & \bf 18.72 & \bf 0.625 \\
        \midrule
        \multirow{3}*{\texttt{SD-v1-4}} & SDEdit \cite{meng2022sdedit}  & \bf 0.344 & 0.258     & 15.93     & 0.512 \\
        & DDIB \cite{Su2022DualDI}                                      & 0.331     & 0.209     & 18.10     & 0.653 \\
        & \cyclediff w/ DDIM ($\eta = 0.1$; ours)                       & 0.334     & \bf 0.272 & \bf 21.92 & \bf 0.731 \\
        \bottomrule
    \end{tabular}
    \end{adjustbox}
\end{table}

\begin{figure}[!th]
\centering
    \includegraphics[width=\linewidth]{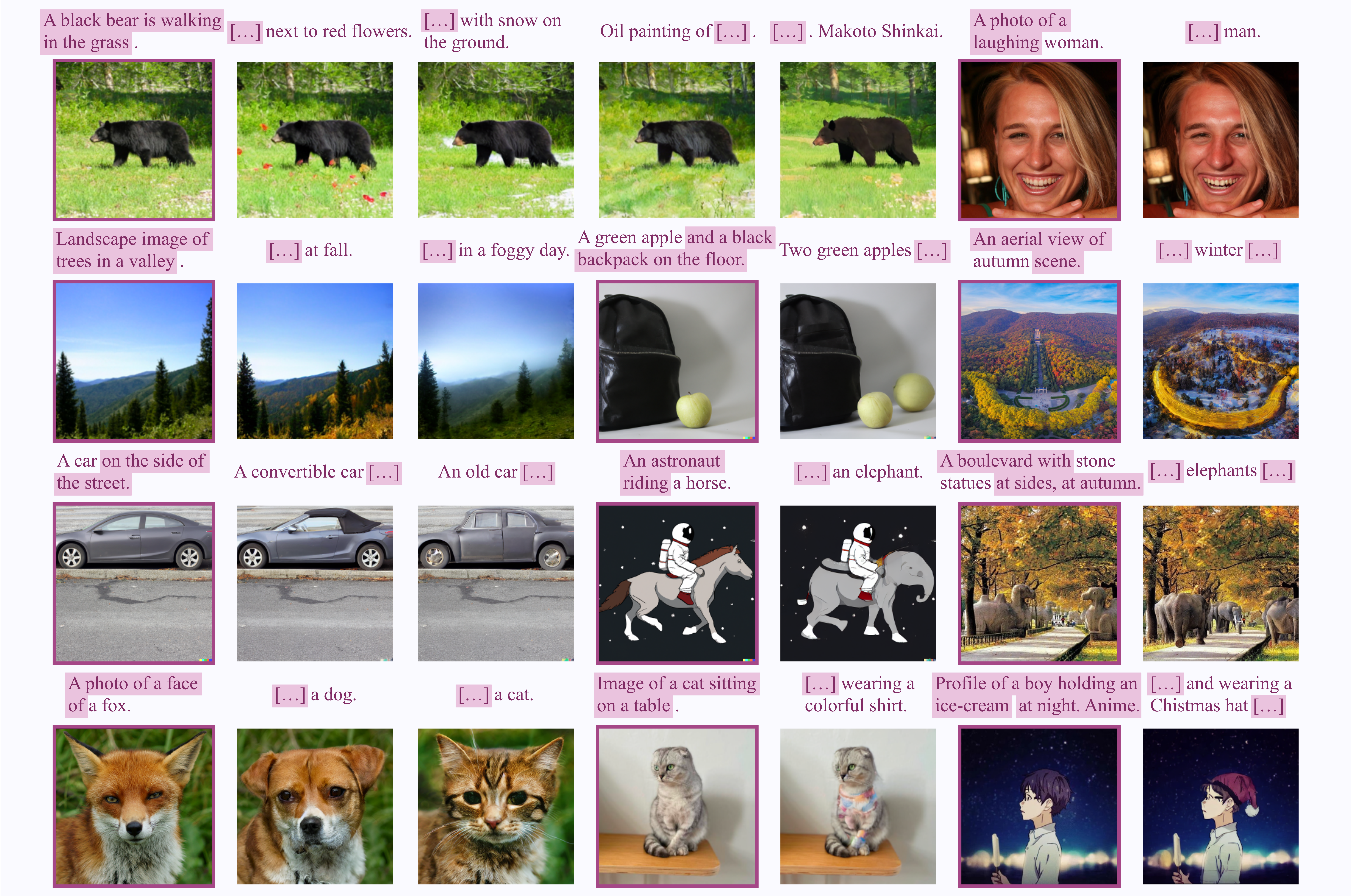}
\caption{\label{fig:dalle-experiment} \cyclediff for zero-shot image editing. Source images $\imagecolor{\boldx}$ are displayed with a purple margin; the other images are the generated $\imagecolor{\hat{\boldx}}$. Within each pair of source and target texts, overlapping text spans are marked in purple in the source text and abbreviated as $[\ldots]$ in the target text. 
}
\end{figure}

\begin{figure}[!ht]
\centering
    \includegraphics[width=\linewidth]{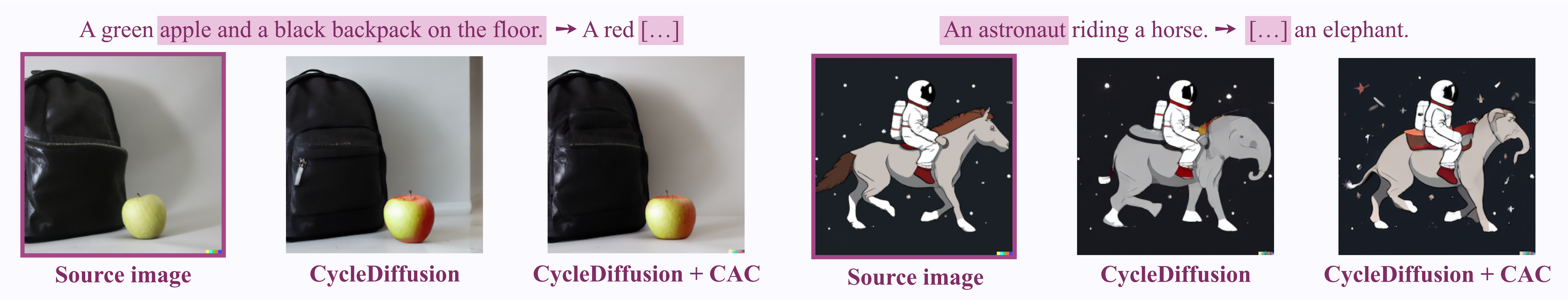}
\caption{\label{fig:cyclediff-cac} Cross Attention Control (CAC; \citealp{Hertz2022PrompttoPromptIE}) helps \cyclediff when the intended \textit{structural} change is small. For instance, when the intended change is color but not shape (left), CAC helps \cyclediff preserve the background; when the intended change is horse $\rightarrow$ elephant, CAC makes the generated elephant look more like a horse in shape. } 
\end{figure}

\begin{figure}[!th]
\centering
    \subfigure[\label{subfig:baby} a baby\baby]{
        \includegraphics[width=0.482\linewidth]{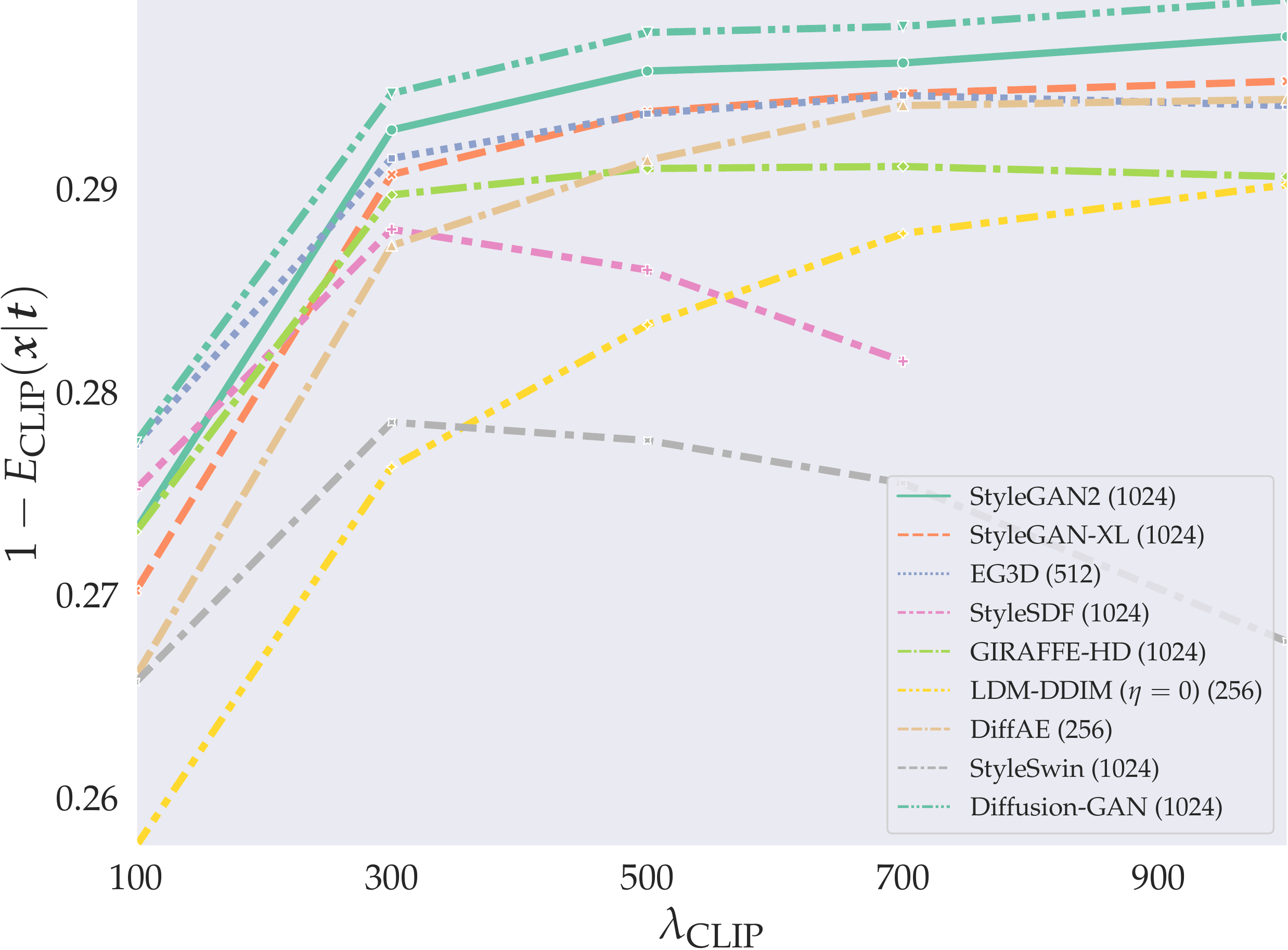}
    }
    \subfigure[\label{subfig:old-person} an old person\oldperson]{
        \includegraphics[width=0.482\linewidth]{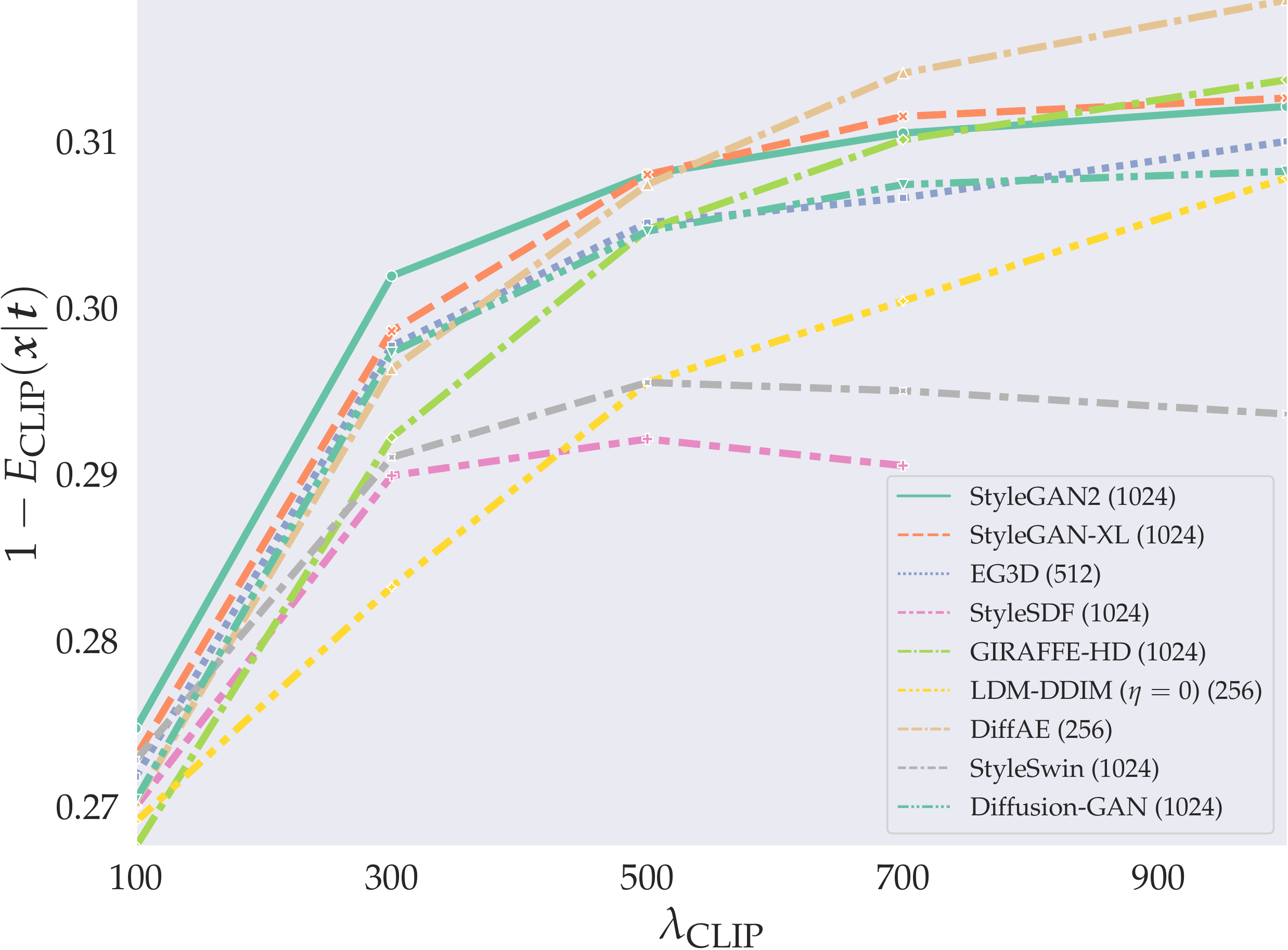}
    }
    \subfigure[\label{subfig:eyeglasses} a person with eyeglasses\eyeglasses]{
        \includegraphics[width=0.482\linewidth]{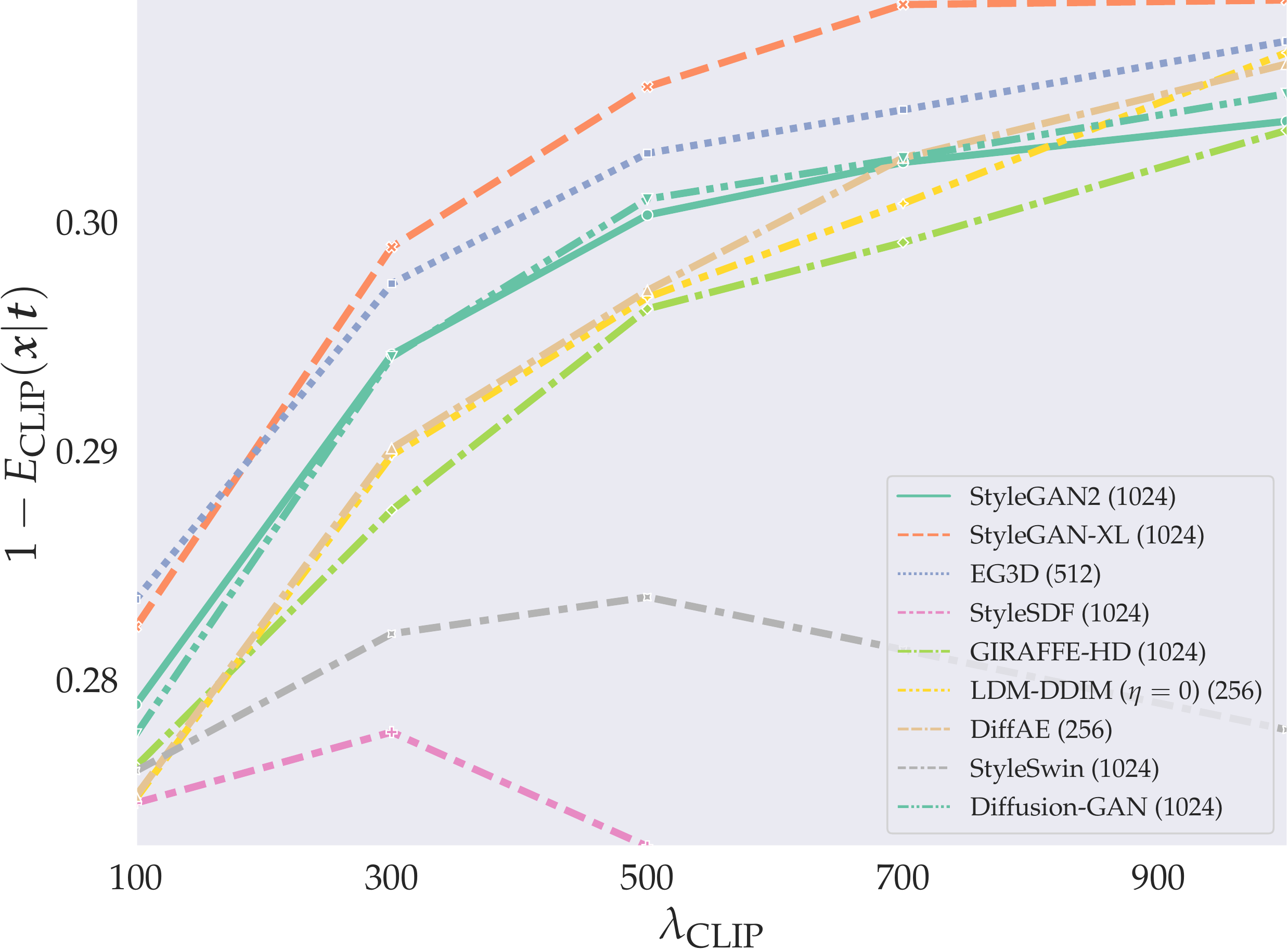}
    }
    \subfigure[\label{subfig:eyeglasses-yellow-hat} a person with eyeglasses and a yellow hat\eyeglasses\yellowhat]{
        \includegraphics[width=0.482\linewidth]{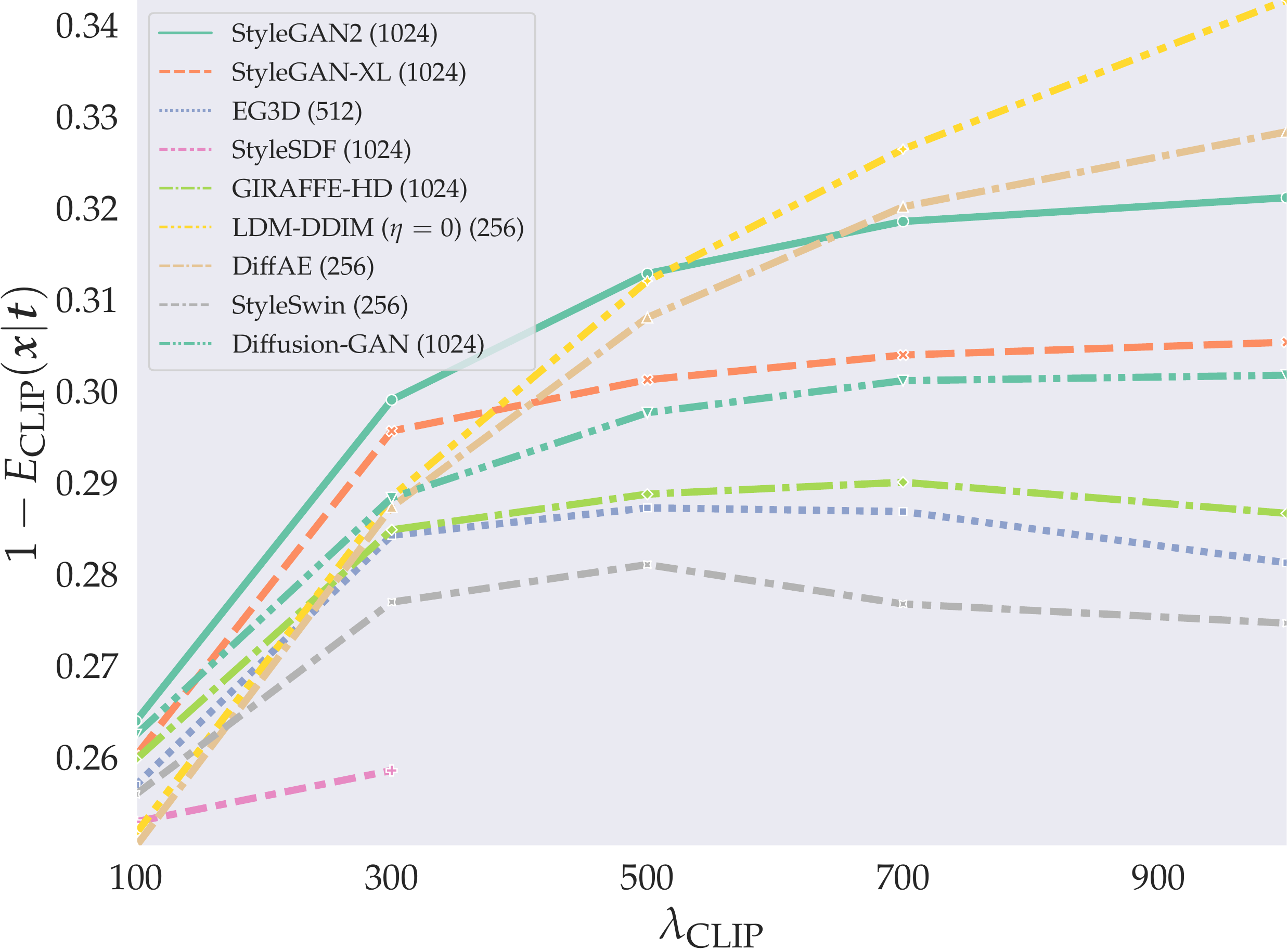}
    }
\caption{\label{fig:clip-curves} Unified plug-and-play guidance for diffusion models and GANs with text and CLIP. The text description used in each plot is \underline{a }p\underline{hoto of }$\big[$\underline{\quad\quad}$\big]$. Image samples and more analyses are in Figure~\ref{fig:clip-samples} and Appendix~\ref{app:more-coverage}. When the guidance becomes complex, diffusion models surpass GANs. } 
\end{figure}

\subsection{Unified Plug-and-Play Guidance for Diffusion Models and GANs}
\label{subsec:control-experiment}

Previous methods for conditional sampling from (\textit{aka} guiding) diffusion models require training the guidance model on noisy images \cite{dhariwal2021diffusion,Liu2021MoreCF}, which deviates from the idea of plug-and-play guidance by leveraging the simple latent prior of generative models \cite{Nguyen2017PlugP}. In contrast, our definition of the Gaussian latent space of different diffusion models allows for unified plug-and-play guidance of diffusion models and GANs. It facilitates principled comparisons over sub-populations and individuals when models are trained on the same dataset. 

We used the text $\controlcolor{\bm{t}}$ to specify sub-population. For instance, \underline{a }p\underline{hoto of bab}y represents the baby sub-population in the domain of human faces. We instantiate the energy in Section~\ref{subsec:control} as $E_{\text{CLIP}}(\imagecolor{\boldx} | \controlcolor{\bm{t}}) = \frac{1}{L} \sum_{l=1}^{L}\Big(1 - \cos\big\langle\text{CLIP}_{\text{img}}\big(\text{DiffAug}_{l}(\imagecolor{\boldx})\big), \text{CLIP}_{\text{text}}(\controlcolor{\bm{t}})\big\rangle\Big)$, where DiffAug$_{l}$ stands for differentiable augmentation \cite{ZhaoLLZ020} that mitigates the adversarial effect, and we sample from the energy-based distribution using Langevin dynamics in Eq.~(\ref{eq:langevin}) with $n=200$, $\sigma = 0.05$. We enumerated the guidance strength (i.e., the coefficient $\lambda_{\controlcolor{\mathcal{C}}}$ in Section~\ref{subsec:control}) $\lambda_{\text{CLIP}} \in \{100, 300, 500, 700, 1000\}$. For evaluation, we reported $(1 - E_{\text{CLIP}}(\imagecolor{\boldx} | \controlcolor{\bm{t}}))$ averaged over 256 samples. This metric quantifies whether the sampled images are consistent with the specified text $\controlcolor{\bm{t}}$. 
Figure~\ref{fig:clip-curves} plots models with pre-trained weights on FFHQ \cite{Karras2019ASG} (citations in Table~\ref{tab:models-included}, Appendix~\ref{app:more-coverage}). 
In Figure~\ref{fig:clip-samples}, we visualize samples for SN-DDPM and DDGAN trained on CelebA. 
We find that diffusion models outperform 2D/3D GANs for complex text, and different models represent the same sub-population differently. 

Broad coverage of individuals is an important aspect of the personalized use of generative models. To analyze this coverage, we guide different models to generate images that are close to a reference $\controlcolor{\boldx_r}$ in the identity (ID) space modeled by the IR-SE50 face embedding model \cite{Deng2019ArcFaceAA}, denoted as $R$. Given an ID reference image $\controlcolor{\boldx_r}$, we instantiated the energy defined in Section~\ref{subsec:control} as $E_{\text{ID}}(\imagecolor{\boldx} | \controlcolor{\boldx_r}) = 1 - \cos\big\langle R(\imagecolor{\boldx}), R(\controlcolor{\boldx_r})\big\rangle$ with strength $\lambda_{\text{ID}}=2500$ (i.e., $\lambda_{\controlcolor{\mathcal{C}}}$ in Section~\ref{subsec:control}). For sampling, we used Langevin dynamics detailed in Eq.~(\ref{eq:langevin}) with $n=200$ and $\sigma = 0.05$. 
To measure ID similarity to the reference image $\controlcolor{\boldx_r}$, we reported $\cos\big\langle R(\imagecolor{\boldx}), R(\controlcolor{\boldx_r})\big\rangle$, averaged over 256 samples. 
In Table~\ref{tab:id-results}, we report the performance of StyleGAN2, StyleGAN-XL, GIRAFFE-HD, EG3D, LDM-DDIM, DDGAN, and DiffAE. DDGAN is trained on CelebAHQ, while others are trained on FFHQ. 
We find that diffusion models have much better coverage of individuals than 2D/3D GANs. Among diffusion models, deterministic LDM-DDIM ($\eta=0$) achieves the best identity guidance performance. We provide image samples of identity guidance in Figure~\ref{fig:individual} (Appendix~\ref{app:more-coverage}).

\begin{figure}[!ht]
\centering
    \subfigure[\label{subfig:baby-stylegan2} StyleGAN2\baby]{
        \includegraphics[width=0.31\linewidth]{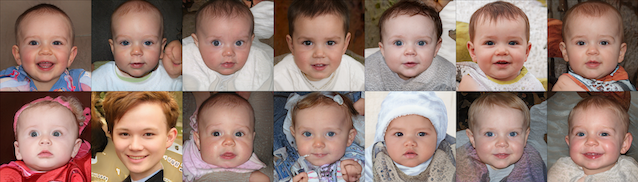}  
    }
    \subfigure[\label{subfig:baby-styleganxl} StyleGAN-XL\baby]{
        \includegraphics[width=0.31\linewidth]{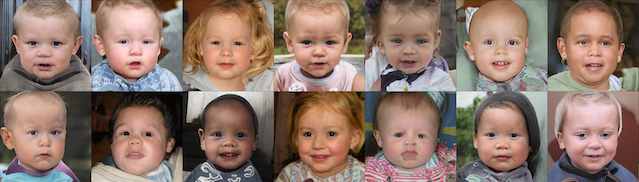}  
    }
    \subfigure[\label{subfig:baby-giraffehd} GIRAFFE HD\baby]{
        \includegraphics[width=0.31\linewidth]{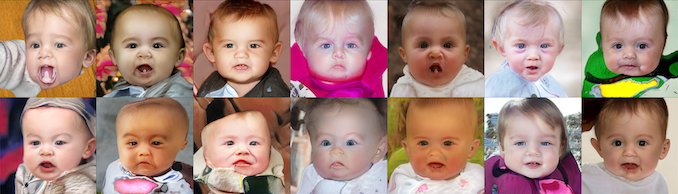}  
    }
    \subfigure[\label{subfig:complex-stylenerf} StyleNeRF\eyeglasses\yellowhat]{
        \includegraphics[width=0.31\linewidth]{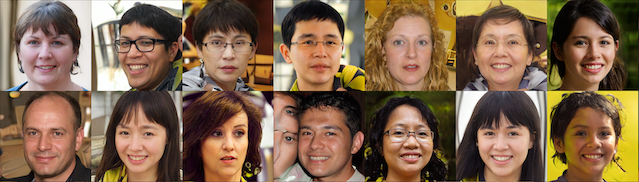}  
    }
    \subfigure[\label{subfig:complex-ldm-ddim} LDM-DDIM ($\eta=0$)\eyeglasses\yellowhat]{
        \includegraphics[width=0.31\linewidth]{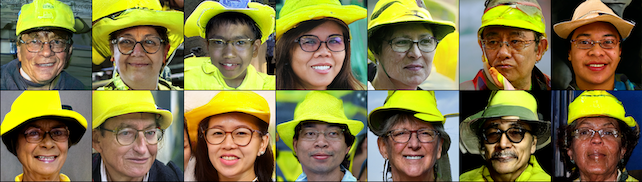}  
    }
    \subfigure[\label{subfig:complex-diffae} DiffAE\eyeglasses\yellowhat]{
        \includegraphics[width=0.31\linewidth]{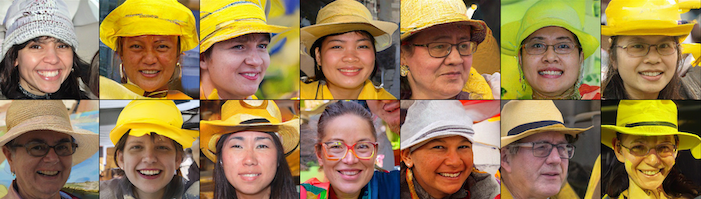}  
    }
    \subfigure[\label{subfig:ddgan-old} DDGAN\oldperson]{
        \includegraphics[width=0.31\linewidth]{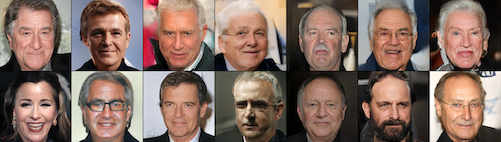}
    }
    \subfigure[\label{subfig:sn-ddpm-old} SN-DDPM\oldperson ]{
        \includegraphics[width=0.31\linewidth]{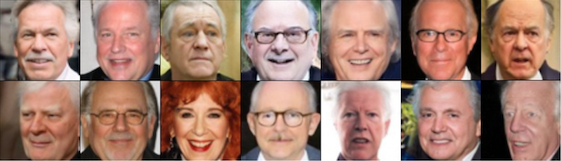}  
    }
    \subfigure[\label{subfig:sn-ddpm-eyeglasses} SN-DDPM\eyeglasses ]{
        \includegraphics[width=0.31\linewidth]{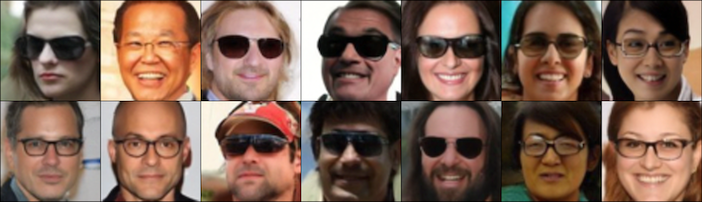}  
    }
\caption{\label{fig:clip-samples} Sampling sub-populations from pre-trained generative models. Notations follow Figure~\ref{fig:clip-curves}. 
}
\end{figure}

\begin{table}[!th]
    \caption{Guiding diffusion models and GANs with ID. ID-$\{$A, B, C, D$\}$ are images from FFHQ. The metric is the ArcFace cosine similarity \cite{Deng2019ArcFaceAA}. See samples in Figure~\ref{fig:individual} (Appendix~\ref{app:more-coverage}). }\smallskip
    \label{tab:id-results}
    \centering
    \begin{adjustbox}{width=\linewidth}
    \begin{tabular}{@{}lccccccc@{}}
        \toprule
        & \multicolumn{2}{c}{\textit{2D GAN}} & \multicolumn{2}{c}{\textit{3D GAN}} & \multicolumn{3}{c}{\textit{Diffusion model}}  \\
        \cmidrule(lr){2-3}\cmidrule(lr){4-5}\cmidrule(l){6-8}
                    & StyleGAN2 & StyleGAN-XL   & GIRAFFE-HD    & EG3D & LDM-DDIM ($\eta = 0$) & DDGAN & DiffAE \\
        \midrule
        ID-A        & 0.561     & 0.681         & 0.616         & 0.468 & \bf 0.904 & 0.837 & 0.873 \\
        ID-B        & 0.604     & 0.688         & 0.590         & 0.454 & \bf 0.896 & 0.805 & 0.838 \\
        ID-C        & 0.495     & 0.636         & 0.457         & 0.403 & \bf 0.892 & 0.795 & 0.852 \\
        ID-D        & 0.554     & 0.687         & 0.574         & 0.436 & \bf 0.911 & 0.831 & 0.873 \\
        \bottomrule
    \end{tabular}
    \end{adjustbox}
\end{table}

\section{Conclusions and Discussion}
\label{sec:conclusion}

This paper provides a unified view of pre-trained generative models by reformulating the latent space of diffusion models. While this reformulation is purely definitional, we show that it allows us to use diffusion models in similar ways as CycleGANs \cite{Zhu2017UnpairedIT} and GANs. Our \cyclediff achieves impressive performance on unpaired image-to-image translation (with two diffusion models trained on two domains independently) and zero-shot image-to-image translation (with text-to-image diffusion models). Our definition of latent code also allows diffusion models to be guided in the same way as GANs (i.e., plug-and-play, without finetuning on noisy images), and results show that diffusion models have broader coverage of sub-populations and individuals than GANs. 

Besides the interesting results, it is worth noting that this paper raised more questions than provided answers. We have provided a formal analysis of the common latent space of stochastic DPMs via the bounded distance between images (Section~\ref{subsec:cyclediff-method}), but it still needs further study. Notably, \citet{Khrulkov2022UnderstandingDL} and \citet{Su2022DualDI} studied deterministic DPMs based on optimal transport. 
Furthermore, \textit{efficient} plug-and-play guidance for stochastic DPMs on high-resolution images with many diffusion steps still remains open. These topics can be further explored in future studies. 

\bibliography{reference}
\bibliographystyle{iclr2023_conference}

\clearpage

\appendix

\section{Mathematical Details of Diffusion Models}
\label{app:math-details}

\subsection{Stochastic DPMs}
\label{subapp:ddpm-variants-math}

In Eq.~(\ref{eq:diffusion-as-implicit-latent}), we use $\boldmu_{T}(\boldx_{t}, t)$ and $\boldsigma_t$ as a high-level abstraction to represent each reverse step $t$ ($T$ is the total number of steps) of stochastic DPMs. In Eq.~(\ref{eq:diffusion-as-implicit-latent}), we define the sampling of $\imagecolor{\boldx}$ as
\begin{equation}
\label{eq:diffusion-as-implicit-latent-copy}
\begin{split}
    &\latentcolor{\boldz := \big(\boldx_T \oplus \boldeps_T \oplus \cdots \oplus \boldeps_{2} \oplus \boldeps_{1}\big)} \sim \mathcal{N}(\bm{0}, \bm{I}), \\
    &\boldx_{T-1} = \boldmu_{T}(\latentcolor{\boldx_{T}}, T) + \boldsigma_T \odot \latentcolor{\boldeps_T}, \\
	&\boldx_{t-1} = \boldmu_{T}(\boldx_{t}, t) + \boldsigma_t \odot \latentcolor{\boldeps_t}, \quad T > t > 0, \\ 
	&\imagecolor{\boldx := \boldx_0}.
\end{split}
\end{equation}
To be self-contained, here we provide details of $\boldmu_{T}(\boldx_{t}, t)$ and $\boldsigma_t$ for DDPM \cite{HoJA20} and DDIM \cite{song2021denoising}. Since the notations are not consistent in the two papers, we follow the notation in each paper respectively and use \ddpmcolor{different} \ddimcolor{colors} to distinguish different notations. Also note that $\boldeps_{\theta}\left(\boldx_{t}, t\right)$ stands for the neural network learned by DDPM and its variants, which should be distinguished from $\boldeps_t$ used throughout this paper. 

\noindent\textbf{DDPM's $\ddpmcolor{\boldmu_{T}}(\boldx_{t}, t)$ and $\ddpmcolor{\boldsigma_t}$:} 
We follow the notation in \citet{HoJA20}.
\begin{align}
    & \ddpmcolor{\boldmu_{T}}(\boldx_{t}, t) := \frac{1}{\sqrt{\ddpmcolor{\alpha_{t}}}}\left(\boldx_{t}-\frac{\ddpmcolor{\beta_{t}}}{\sqrt{1-\ddpmcolor{\bar{\alpha}_{t}}}} \boldeps_{\theta}\left(\boldx_{t}, t\right)\right), \\
    & \ddpmcolor{\boldsigma_t} := \left\{  
        \begin{array}{lr}  
             \sqrt{\ddpmcolor{\beta_t}}\bm{I}, & \text{(option 1)} \\  
             \displaystyle \sqrt{\frac{(1 - \ddpmcolor{\bar{\alpha}_{t-1}})\ddpmcolor{\beta_t}}{1 - \ddpmcolor{\bar{\alpha}_{t}}}}\bm{I}, & \text{(option 2)} \\  
             \displaystyle \exp\left(\frac{\bm{v}_{\theta}(\boldx_{t}, t)}{2} \log \ddpmcolor{\beta_t} + \frac{\bm{I} - \bm{v}_{\theta}(\boldx_{t}, t)}{2} \log \frac{(1 - \ddpmcolor{\bar{\alpha}_{t-1}})\ddpmcolor{\beta_t}}{1 - \ddpmcolor{\bar{\alpha}_{t}}}\right). & \text{(option 3)}
        \end{array}  \right.  
\end{align}

\noindent\textbf{DDIM's $\ddimcolor{\boldmu_{T}}(\boldx_{t}, t)$ and $\ddimcolor{\boldsigma_t}$:} 
We follow the notation in \citet{song2021denoising}.
\begin{align}
    & \ddimcolor{\boldmu_{T}}(\boldx_{t}, t) := \sqrt{\ddimcolor{\alpha_{t-1}}}\left(\frac{\boldx_{t}-\sqrt{1-\ddimcolor{\alpha_{t}}} \boldeps_{\theta}\left(\boldx_{t}, t\right)}{\sqrt{\ddimcolor{\alpha_{t}}}}\right) + \sqrt{1-\ddimcolor{\alpha_{t-1}}-\sigma_{t}^{2}} \cdot \boldeps_{\theta}\left(\boldx_{t}, t\right), \\
    & \ddimcolor{\boldsigma_t} := \sigma_t \bm{I}, \quad\quad \text{where } \sigma_t = \eta \sqrt{\left(1-\ddimcolor{\alpha_{t-1}}\right) /\left(1-\ddimcolor{\alpha_{t}}\right)} \sqrt{1-\ddimcolor{\alpha_{t}} / \ddimcolor{\alpha_{t-1}}},
\end{align}
where $\eta$ is a hyper-parameter. 

\subsection{Deterministic DDIM}
\noindent\textbf{Deterministic DDIM's $\ddimcolor{\boldmu_{T}}(\boldx_{t}, t)$:} 
Deterministic DDIM is a special case of DDIM when $\eta = 0$. For details of other deterministic DPMs, please check the original papers. 

\subsection{Score-based Generative Modeling with SDE}

\citet{0011SKKEP21} proposed a unified view of DDPM and score matching with Langevin dynamics (SMLD) as different stochastic differential equations (SDEs). Since the randomness in their sampling algorithms purely come from Gaussian noise, we can incorporate their models and sampling methods into our framework. As a demonstration, we show how to define $\boldmu_{T}(\boldx_{t}, t)$ and $\boldsigma_t$ for their predictor-only sampling with reverse diffusion samplers. Given a forward SDE:
\begin{equation}
    \mathrm{d}\boldx = \bm{f}(\boldx, t) \mathrm{d}t + \boldsigma(t) \odot \mathrm{d}\bm{w},
\end{equation}
the reverse-time SDE is 
\begin{equation}
    \mathrm{d}\boldx = [\bm{f}(\boldx, t) - \boldsigma(t)^2 \odot \nabla_{\boldx}\log p_t(\boldx)] \mathrm{d}t + \boldsigma(t) \odot \mathrm{d}\bar{\bm{w}}.
\end{equation}
Suppose the forward SDE is discretized in the following form:
\begin{equation}
    \boldx_{t+1} = \boldx_{t} + \bm{f}_t(\boldx_t) + \boldsigma_t \odot \boldz_t, \quad t = 0, \ldots, T - 1, \quad \boldz_t \sim \mathcal{N}(\bm{0}, \bm{I}). 
\end{equation}
Reverse diffusion samplers discretize the reserve-time SDE in a similar form:
\begin{equation}
\label{eq:scoresde-reverse-diffusion}
\begin{split}
    &\boldx_{t-1} = \boldx_{t} - \bm{f}_{t}(\boldx_{t}) + \boldsigma_{t}^{2} \odot \bm{s}_{\boldtheta}(\boldx_{t}, t) + \boldsigma_{t} \odot \boldeps_{t}, \quad t = 1, \ldots, T, \quad \boldeps_{t} \sim \mathcal{N}(\bm{0}, \bm{I}),
\end{split}
\end{equation}
where $\bm{s}_{\boldtheta}$ is a neural network trained to match the score $\nabla_{\boldx}\log p_t(\boldx)$. By comparing Eq.~(\ref{eq:diffusion-as-implicit-latent-copy}) and Eq.~(\ref{eq:scoresde-reverse-diffusion}), we have $\boldmu_{T}(\boldx_{t}, t) := \boldx_{t} - \bm{f}_{t}(\boldx_{t}) + \boldsigma_{t}^{2} \odot \bm{s}_{\boldtheta}(\boldx_{t}, t)$. 

\subsection{DDGAN}

In Eq.~(\ref{eq:ddgan}), we use $\boldmu_{T}(\boldx_{t}, \boldz_t, t)$ and $\boldsigma_t$ as high-level abstractions to represent each reverse step $t$ ($T$ is the total number of steps) of DDGAN. The generation process is defined as
\begin{equation}
\begin{split}
    &\latentcolor{\boldz := \big(\boldx_T \oplus \boldz_T \oplus \boldeps_T \oplus \cdots \oplus \boldz_{2} \oplus \boldeps_{2} \oplus \boldz_{1}\big)} \sim \mathcal{N}(\bm{0}, \bm{I}), \\
    & \boldx_{T-1} = \boldmu_{T}(\latentcolor{\boldx_{T}}, \latentcolor{\boldz_T}, T) + \boldsigma_T \odot \latentcolor{\boldeps_{T}}, \\
    & \boldx_{t-1} = \boldmu_{T}(\boldx_{t}, \latentcolor{\boldz_t}, t) + \boldsigma_t \odot \latentcolor{\boldeps_{t}}, \quad T > t > 1, \\
    & \imagecolor{\boldx := \boldx_{0}} = \boldmu_{T}(\boldx_{1}, \latentcolor{\boldz_1}, 1).
\end{split}
\end{equation}
To be self-contained, here we provide details of $\boldmu_{T}(\boldx_{t}, \boldz_t, t)$ and $\boldsigma_t$ of DDGAN. 

\noindent\textbf{DDGAN's $\boldmu_{T}(\boldx_{t}, \boldz_t, t)$ and $\boldsigma_t$: } We follow the notation in \citet{xiao2022tackling} and \citet{HoJA20}.
\begin{align}
    & \boldmu_{T}(\boldx_{t}, \boldz_t, t) := \frac{\sqrt{\ddpmcolor{\bar{\alpha}_{t-1}}} \ddpmcolor{\beta_{t}}}{1-\ddpmcolor{\bar{\alpha}_{t}}} G_\theta(\boldx_{t}, \boldz_t, t)+\frac{\sqrt{\ddpmcolor{\alpha_{t}}}\left(1-\ddpmcolor{\bar{\alpha}_{t-1}}\right)}{1-\ddpmcolor{\bar{\alpha}_{t}}} \boldx_{t}, \\
    & \boldsigma_t := \sqrt{\frac{(1 - \ddpmcolor{\bar{\alpha}_{t-1}})\ddpmcolor{\beta_t}}{1 - \ddpmcolor{\bar{\alpha}_{t}}}}\bm{I},
\end{align}
where $G_\theta(\boldx_{t}, \boldz_t, t)$ is a conditional GAN learned by DDGAN, which should be distinguished from the deterministic mapping $\gen$ used throughout this paper. 

\begin{algorithm}[!ht]
\DontPrintSemicolon
\textbf{Input:} an image $\imagecolor{\boldx := \boldx_0}$, a pre-trained stochastic DPM with $\boldmu_{T}(\boldx_{t}, t)$, $\boldsigma_t$, and $q(\boldx_{1:T}|\imagecolor{\boldx_0})$\;
1. Sample $\boldx_1, \ldots, \boldx_{T-1}, \latentcolor{\boldx_T} \sim q(\boldx_{1:T}|\imagecolor{\boldx_0})$\;
2. $\latentcolor{\boldz} = \latentcolor{\boldx_T}$\;
	\For{$t = T, \ldots, 1$}{		
		3. $\latentcolor{\boldeps_t} = \big(\boldx_{t-1} - \boldmu_{T}(\boldx_{t}, t)\big) / \boldsigma_t$\;
		4. $\latentcolor{\boldz} = \latentcolor{\boldz} \oplus \latentcolor{\boldeps_t}$\;
	}
5. \textbf{Output: } $\latentcolor{\boldz}$\;
	\caption{DPM-Encoder}
	\label{alg:dpm-encoder-copy}
\end{algorithm}

\section{Mathematical Details of DPM-Encoder}
\label{app:dpm-encoder}

In this section, we provide details of our DPM-Encoder introduced in Section~\ref{subsec:encoding}, which samples $\latentcolor{\boldz} \sim \mathrm{DPMEnc}(\latentcolor{\boldz}|\imagecolor{\boldx}, G)$. For each image \imagecolor{$\boldx := \boldx_0$}, stochastic DPMs define a posterior distribution over the noisy images $\boldx_{1:T}$, denoted as $q(\boldx_{1:T}|\imagecolor{\boldx_0})$ \cite{HoJA20,song2021denoising}. To be self-contained, we provide details of this posterior distribution for different diffusion models. 

\noindent\textbf{DDPM's posterior $\ddpmcolor{q}(\boldx_{1:T}|\imagecolor{\boldx_0})$: } We follow the notation in \citet{HoJA20}.
\begin{equation}
    \ddpmcolor{q}(\boldx_{1: T} | \imagecolor{\boldx_{0}}):=\prod_{t=1}^{T} \ddpmcolor{q}(\boldx_{t} | \boldx_{t-1}), \quad \ddpmcolor{q}(\boldx_{t} | \boldx_{t-1}):=\mathcal{N}\left(\boldx_{t} ; \sqrt{1-\ddpmcolor{\beta_{t}}} \boldx_{t-1}, \ddpmcolor{\beta_{t}} \mathbf{I}\right).
\end{equation}

\noindent\textbf{DDIM's posterior $\ddimcolor{q}(\boldx_{1:T}|\imagecolor{\boldx_0})$: } We follow the notation in \citet{song2021denoising}. 
\begin{align}
    & \ddimcolor{q}(\boldx_{1: T} | \imagecolor{\boldx_{0}}):=\ddimcolor{q}(\boldx_{T} | \imagecolor{\boldx_{0}}) \prod_{t=2}^{T} \ddimcolor{q}(\boldx_{t-1} | \boldx_{t}, \imagecolor{\boldx_{0}}), \\
    & \ddimcolor{q}(\boldx_{T} | \imagecolor{\boldx_{0}})=\mathcal{N}(\sqrt{\ddimcolor{\alpha_{T}}} \imagecolor{\boldx_{0}},\left(1-\ddimcolor{\alpha_{T}}\right) \boldsymbol{I}), \\
    \begin{split}
        & \ddimcolor{q}(\boldx_{t-1} | \boldx_{t}, \imagecolor{\boldx_{0}})=\mathcal{N}\left(\sqrt{\ddimcolor{\alpha_{t-1}}} \imagecolor{\boldx_{0}}+\sqrt{1-\ddimcolor{\alpha_{t-1}}-\sigma_{t}^{2}} \cdot \frac{\boldx_{t}-\sqrt{\ddimcolor{\alpha_{t}}} \imagecolor{\boldx_{0}}}{\sqrt{1-\ddimcolor{\alpha_{t}}}}, \sigma_{t}^{2} \boldsymbol{I}\right), \\
        & \quad\quad\quad\quad \text{where } \sigma_t = \eta \sqrt{\left(1-\ddimcolor{\alpha_{t-1}}\right) /\left(1-\ddimcolor{\alpha_{t}}\right)} \sqrt{1-\ddimcolor{\alpha_{t}} / \ddimcolor{\alpha_{t-1}}}.
    \end{split}
\end{align}

Based on the posterior distribution $q(\boldx_{1:T}|\imagecolor{\boldx_0})$, DPM-Encoder samples the latent code $\latentcolor{\boldz}$ by first sampling noisy images $\boldx_1, \ldots, \latentcolor{\boldx_T}$ from $q(\boldx_{1:T}|\imagecolor{\boldx_0})$ and computing the $\latentcolor{\boldeps_t}$ according to Eq.~(\ref{eq:diffusion-as-implicit-latent}) and Eq.~(\ref{eq:diffusion-as-implicit-latent-copy}). Formally, we define the sampling process $\latentcolor{\boldz} \sim \mathrm{DPMEnc}(\latentcolor{\boldz}|\imagecolor{\boldx}, G)$ as
\begin{equation}
\label{eq:dpm-encoder-copy}
\begin{split}
    &\boldx_1, \ldots, \boldx_{T-1}, \latentcolor{\boldx_T} \sim q(\boldx_{1:T}|\imagecolor{\boldx_0}), \quad \latentcolor{\boldeps_t} = \big(\boldx_{t-1} - \boldmu_{T}(\boldx_{t}, t)\big) / \boldsigma_t, \quad t = T, \ldots, 1, \\
    &\quad\quad\quad\quad\quad\quad\quad\quad\quad\quad  \latentcolor{\boldz := \big(\boldx_T \oplus \boldeps_T \oplus \cdots \oplus \boldeps_{2} \oplus \boldeps_{1}\big)}.
\end{split}
\end{equation}
DPM-Encoder guarantees perfect reconstruction. The proof is straightforward, provided as follows. 

\newtheorem{prop}{Proposition}
\begin{prop}
\label{prep:latent-space-ebm}
    (Invertibility of DPM-Encoder) For each $\latentcolor{\boldz} \sim \mathrm{DPMEnc}(\latentcolor{\boldz}|\imagecolor{\boldx}, G)$ defined in Eq.~(\ref{eq:dpm-encoder-copy}), we have $\imagecolor{\boldx} = \imagecolor{\bar{\boldx}} := G(\latentcolor{\boldz})$, where $\imagecolor{\bar{\boldx}} := G(\latentcolor{\boldz})$ is defined as
    \begin{equation}
    \begin{split}
        &\bar{\boldx}_{T-1} = \boldmu_{T}(\latentcolor{\boldx_{T}}, T) + \boldsigma_T \odot \latentcolor{\boldeps_T}, \\
	    &\bar{\boldx}_{t-1} = \boldmu_{T}(\bar{\boldx}_{t}, t) + \boldsigma_t \odot \latentcolor{\boldeps_t}, \quad T > t > 0, \\
	    & \imagecolor{\bar{\boldx} := \bar{\boldx}_0}.
    \end{split}
    \end{equation}
\end{prop}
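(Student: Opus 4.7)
The plan is to prove the identity by backward induction on the time index $t$, showing that the decoded trajectory $\bar{\boldx}_T, \bar{\boldx}_{T-1}, \ldots, \bar{\boldx}_0$ coincides term-by-term with the encoded trajectory $\boldx_T, \boldx_{T-1}, \ldots, \boldx_0$. The conceptual engine is that the encoder in Eq.~(\ref{eq:dpm-encoder-copy}) defines $\latentcolor{\boldeps_t}$ as precisely the quantity that algebraically inverts one reverse-sampling step, namely $\latentcolor{\boldeps_t} := (\boldx_{t-1} - \boldmu_T(\boldx_t, t))/\boldsigma_t$, so that by construction $\boldx_{t-1} = \boldmu_T(\boldx_t, t) + \boldsigma_t \odot \latentcolor{\boldeps_t}$. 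The decoder just re-runs this identity forward, so the equality should propagate mechanically.

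For the base case, I would note that the first slot of $\latentcolor{\boldz}$ is by definition $\latentcolor{\boldx_T}$, which is also the endpoint of the posterior sample in the encoder; so $\bar{\boldx}_T = \latentcolor{\boldx_T} = \boldx_T$ trivially. Then $\bar{\boldx}_{T-1} = \boldmu_T(\latentcolor{\boldx_T}, T) + \boldsigma_T \odot \latentcolor{\boldeps_T}$, which by the definition of $\latentcolor{\boldeps_T}$ equals $\boldx_{T-1}$. For the inductive step, assuming $\bar{\boldx}_t = \boldx_t$ for some $t < T$, the decoder produces $\bar{\boldx}_{t-1} = \boldmu_T(\bar{\boldx}_t, t) + \boldsigma_t \odot \latentcolor{\boldeps_t} = \boldmu_T(\boldx_t, t) + \boldsigma_t \odot \latentcolor{\boldeps_t} = \boldx_{t-1}$, using the inductive hypothesis in the first equality and the defining identity of $\latentcolor{\boldeps_t}$ in the second. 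Iterating down to $t = 0$ gives $\imagecolor{\bar{\boldx}} = \bar{\boldx}_0 = \boldx_0 = \imagecolor{\boldx}$.

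There is essentially no hard step here; the only subtlety worth flagging is bookkeeping, making sure the same sampled trajectory $\boldx_{1:T-1}$ is being referenced on both the encoding and decoding sides. This is automatic because $\latentcolor{\boldeps_t}$ is a deterministic function of the very $\boldx_{t-1}, \boldx_t$ drawn in the encoder, so the decoder has no freedom to deviate. A mild regularity caveat is that $\boldsigma_t$ must be coordinate-wise nonzero for the division in the encoder to be well-defined; for the deterministic DDIM limit ($\eta = 0$) this degenerates, but in that case the $\latentcolor{\boldeps_t}$ slots drop out of $\latentcolor{\boldz}$ entirely, as in Eq.~(\ref{eq:ddim}), and the same induction goes through using $\bar{\boldx}_{t-1} = \boldmu_T(\bar{\boldx}_t, t)$. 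Thus the only real content of the proposition is that the encoder's definition of $\latentcolor{\boldeps_t}$ is arranged to make one reverse step exactly invertible, and induction lifts this from one step to the full chain.
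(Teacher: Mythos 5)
Your proof is correct and takes essentially the same route as the paper: backward induction on $t$, with the base case $\bar{\boldx}_{T-1} = \boldx_{T-1}$ following from the definition of $\latentcolor{\boldeps_T}$ and the inductive step unwinding $\latentcolor{\boldeps_t}$ in the same way. The remark that $\boldsigma_t$ must be coordinate-wise nonzero for the encoder's division to make sense is a reasonable caveat the paper leaves implicit, but it does not change the argument.
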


\begin{proof}
    We prove $\bar{\boldx}_{t} = \boldx_{t}$ for all $T - 1 \geq t \geq 0$ by induction. The proposition holds when $\imagecolor{\bar{\boldx}_{0}} = \imagecolor{\boldx_{0}}$. To begin with, $\bar{\boldx}_{T-1} = \boldx_{T-1}$ because
    \begin{align}
        \bar{\boldx}_{T-1} &= \boldmu_{T}(\latentcolor{\boldx_{T}}, T) + \boldsigma_T \odot \latentcolor{\boldeps_T} \\
        &= \boldmu_{T}(\latentcolor{\boldx_{T}}, T) + \boldsigma_T \odot \big(\boldx_{T-1} - \boldmu_{T}(\latentcolor{\boldx_{T}}, T)\big) / \boldsigma_T = \boldx_{T-1}.
    \end{align}
    For $T - 1 \geq t > 0$, when $\bar{\boldx}_{t} = \boldx_{t}$, we have
    \begin{align}
        \bar{\boldx}_{t-1} &= \boldmu_{T}(\bar{\boldx}_t, t) + \boldsigma_t \odot \latentcolor{\boldeps_t} \\
        &= \boldmu_{T}(\boldx_t, t) + \boldsigma_t \odot \latentcolor{\boldeps_t} \\
        &= \boldmu_{T}(\boldx_t, t) + \boldsigma_t \odot \big(\boldx_{t-1} - \boldmu_{T}(\boldx_{t}, t)\big) / \boldsigma_t = \boldx_{t-1}.
    \end{align}
\end{proof}

\section{Experimental Details of Zero-Shot Image-to-Image Translation}
\label{app:experimental-details}

\noindent\textbf{Sources of images in the 150 tuples: } \ For the zero-shot image-to-image translation experiment, we created a set of 150 tuples as task input, which include but are not limited to: (1) image generated by \dalle 2 \cite{Ramesh2022HierarchicalTI}, (2) real images from  \citet{Ruiz2022DreamBoothFT}, (3) real images from \cite{Hertz2022PrompttoPromptIE}, (4) real images collected by the authors. 

\noindent\textbf{Per sample selection criterion: } For each test sample, we allow each method to enumerate some combinations of hyperparameters (detailed below). To select the best combination for each sample, we used the directional CLIP score $\mathcal{S}_{\text{D-CLIP}}$ as the criterion (higher is better). 

\noindent\textbf{DDIB: } DDIB edits images by using a deterministic DPM conditioned on the source text $\bm{t}$ to encode the source image, followed by decoding conditioned on the target text $\hat{\bm{t}}$. We used the deterministic DDIM sampler with 100 steps. We set the classifier-free guidance of the encoding step as $1$; we enumerated the classifier-free guidance of the decoding step as $\{1, 1.5, 2, 3, 4, 5\}$. 

\noindent\textbf{SDEdit: } SDEdit edits images by adding noise to the original image (the encoding step), followed by denoising the noised image with a diffusion model trained on the target domain (the decoding step). For zero-shot image-to-image translation, the decoding step of SDEdit uses the text-to-image diffusion model conditioned on the target image $\hat{\bm{t}}$. Notably, SDEdit does not provide a way to take the source text $\bm{t}$ as input. We used the DDIM sampler ($\eta = 0.1$) with 100 steps. We enumerated the classifier-free guidance of the decoding step as $\{1, 1.5, 2, 3, 4, 5\}$; we enumerated the encoding step as $\{15, 20, 25, 30, 40, 50\}$; we ran $15$ trials for each hyperparameter combination. 

\noindent\textbf{\cyclediff: } For our \cyclediff, we used the DDIM sampler ($\eta = 0.1$) with 100 steps. We set the classifier-free guidance of the encoding process as $1$; we enumerated the classifier-free guidance of the decoding step as $\{1, 1.5, 2, 3, 4, 5\}$; we enumerated the early stopping step $T_{\text{es}}$ as $\{15, 20, 25, 30, 40, 50\}$; we ran $15$ trials for each hyperparameter combination. 

\section{Resources}

Our experiments used publicly available pre-trained checkpoints (except for the diffusion models trained by us on AFHQ Cat and Wild; see Section~\ref{sec:experiments}). Each experiment was run on one NVIDIA RTX A4000 (16G) / RTX A6000 (48G) / A100 (40G) GPU. \acceptedtext{Our codes are based on the PyTorch library and are now available at \opensourceunify~and \opensourcezero.}\anonymoustext{We will make our code, configuration files, and experiment commands publicly available. }

\clearpage

\section{Additional Results for Zero-Shot Image-to-Image Translation}
\label{app:dalle-details}

Figure~\ref{fig:dalle-experiment-baseline} provides a qualitative comparison for zero-shot image-to-image translation. Compared with DDIB and SDEdit, \cyclediff greatly improves the faithfulness to the source image.

\begin{figure}[!th]
\centering
    \includegraphics[width=0.8\linewidth]{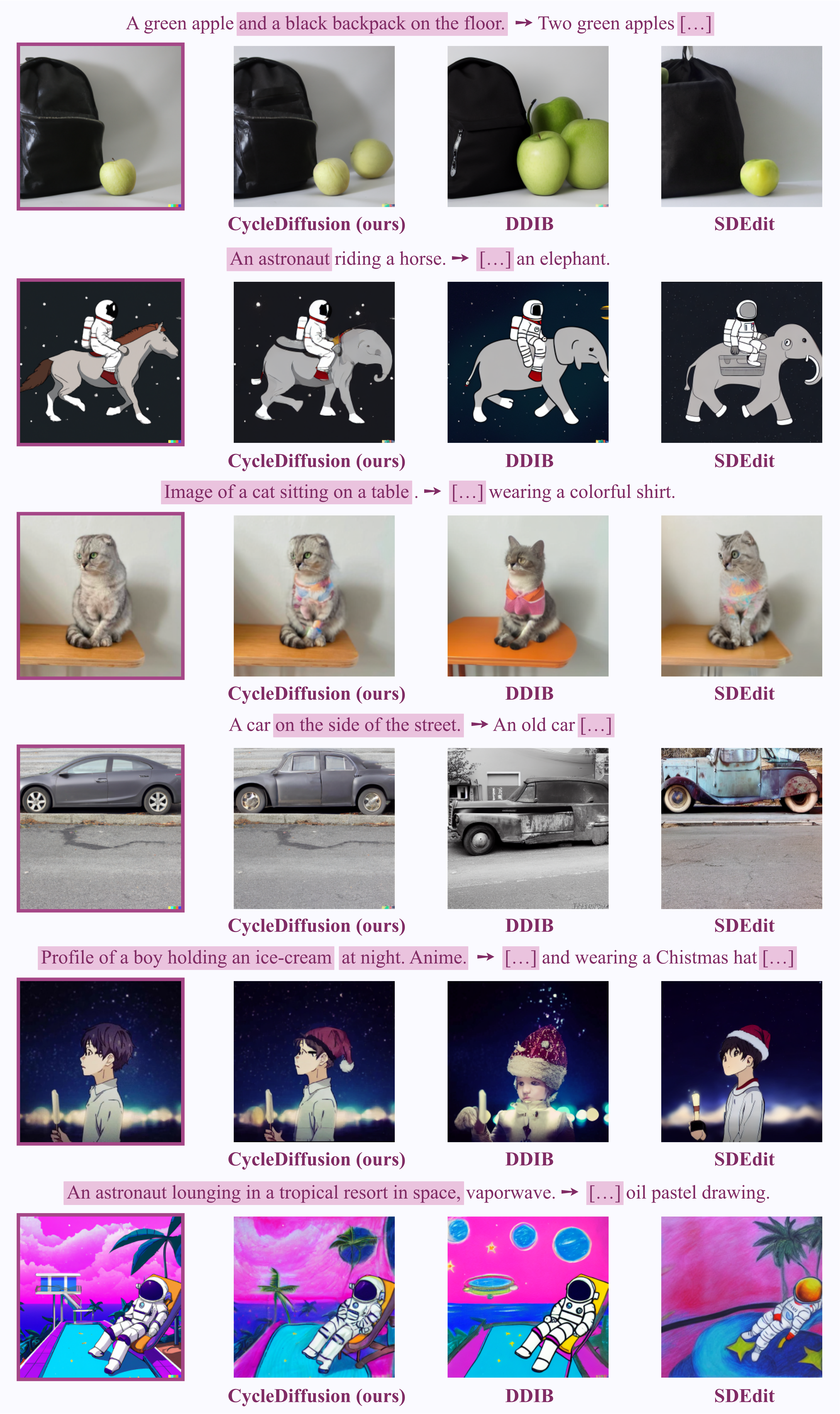}
\caption{\label{fig:dalle-experiment-baseline} Samples for zero-shot image-to-image translation. Notations follow Figure~\ref{fig:dalle-experiment}. Compared with DDIB and SDEdit, \cyclediff greatly improves the faithfulness to the source image. }
\end{figure}

\section{Local Editing DDIM's High-Dimensional Latent Code}
\label{subsec:local-editing} 

Local editing of low-dimensional latent code has been shown to be useful for semantic-level image manipulation \cite{Shen2022InterFaceGANIT}. However, it is unclear whether we can perform semantic-level image manipulation via local editing in the high-dimensional latent space diffusion models. Note that this is different from mask-then-inpaint \cite{Ramesh2022HierarchicalTI}, edit-with-scribbles \cite{meng2022sdedit}, or domain adaptation \cite{Kim2021DiffusionCLIPTD}). Notably, it does not need the classifier to be adapted to noisy images as done by the classifier guidance \cite{dhariwal2021diffusion,Liu2021MoreCF}.

Given an image $\imagecolor{\boldx_{\itm{ori}}}$, we encode it as $\latentcolor{\boldz_{\itm{ori}}}$, edit it as $\latentcolor{\boldz_{\itm{edit}}} = \latentcolor{\boldz_{\itm{ori}}} + \latentcolor{\bm{n}}$, and compute the edited image $\imagecolor{\boldx_{\itm{edit}}} = \gen(\latentcolor{\boldz_{\itm{edit}}})$. 
To learn the vector $\latentcolor{\bm{n}}$ for a target class $a$, we optimize
\begin{equation}
    \mathop{\arg\min}_{\|\latentcolor{\bm{n}}\|_2 = r} \mathbb{E}_{\latentcolor{\boldz_{\itm{ori}}} \sim p_{\latentcolor{\boldz}}(\latentcolor{\boldz_{\itm{ori}}}), \ \latentcolor{\boldz_{\itm{edit}}} = \latentcolor{\boldz_{\itm{ori}}} + \latentcolor{\bm{n}}}\Big[- \lambda_{\itm{cls}}\log P\big(a|\gen(\latentcolor{\boldz_{\itm{edit}}})\big) - \cos\big\langle R(\gen(\latentcolor{\boldz_{\itm{edit}}})), R(\gen(\latentcolor{\boldz_{\itm{ori}}}))\big\rangle  \Big],
\end{equation}
where $P(\cdot|\imagecolor{\boldx})$ is a classifier trained on CelebA \cite{Liu2015DeepLF}, and $R$ is the IR-SE50 face embedding model \cite{Deng2019ArcFaceAA} to preserve the identity. Empirically, we find that LDM-DDIM ($\eta=0$) works the best for local editing, as shown in Figure~\ref{fig:local-editing}.

\begin{figure}[!th]
\centering
    \includegraphics[width=\linewidth]{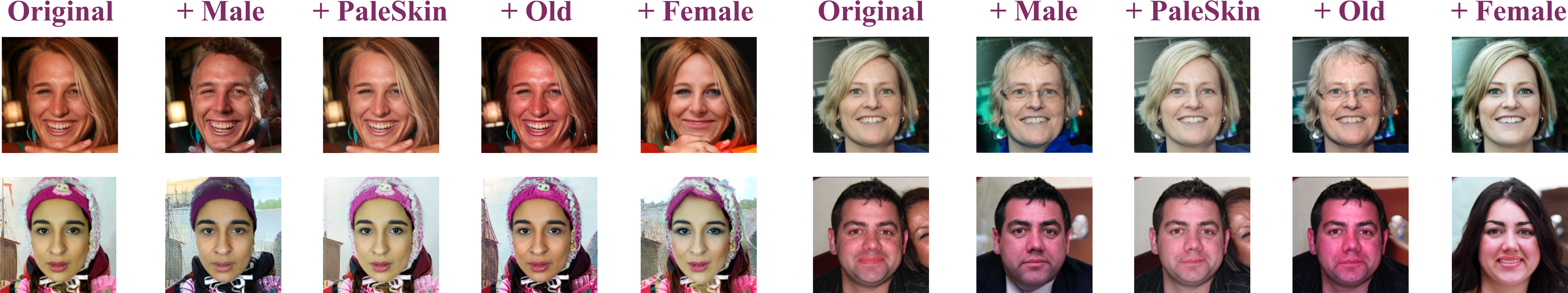}
\caption{\label{fig:local-editing} Image manipulation by local editing of diffusion models' latent code. The diffusion model used here is the deterministic LDM-DDIM ($\eta=0$). }
\end{figure}

\begin{table}[!ht]
	\centering
	\caption{State-of-the-art generative models used in this paper. Notations: \textit{struc.}, --, and $\oplus$ stand for \textit{structure}, \textit{no ``latent codes''},\footref{footnote:latent-code} and \textit{progressive generation}, respectively. 
	}\smallskip
	\label{tab:models-included}
	\begin{adjustbox}{width=\linewidth}
		\begin{tabular}{@{}l@{}l@{}c@{}c@{}c@{}c@{}c@{}}
			\toprule
			& \bf  Model name & \bf Latent prior & \bf Objective & \bf  Architecture & \ \ \ \ \bf Latent struc. \ \ \ \ & \bf Resolution \\
			\midrule
			\multirow{7}*{\textit{Diffusion} \ \ } 
			& DDPM \cite{HoJA20} \textit{etc.}          & -- & ELBO & \multirow{7}*{$\{\text{CNN}, \text{ViT}\}$} & -- & 256 \\
			& DDIM ($\eta=0$) \cite{song2021denoising}  & Gaussian  & ELBO & & spatial & 256 \\
			& SN-DDPM \cite{Bao2022EstimatingTO}        & -- & ELBO & & -- & 64 \\
			& ScoreSDE \cite{0011SKKEP21}               & --        & \ \ ELBO / SM \ \ & & -- & 256 / 1024    \\
            & LDM \cite{Rombach2021ldm} \textit{etc.}   & diffusion & ELBO & & spatial & 256 \\
            & DiffAE \cite{Preechakul2021DiffusionAT}   & diffusion & ELBO & & hybrid & 256 \\
            & DDGAN \cite{xiao2022tackling}             & -- & hybrid & & -- & 256 \\
			\midrule
	        \multirow{5}*{\textit{2D GAN}} 
	        & StyleGAN2 \cite{Karras2020AnalyzingAI}    & \multirow{5}*{Gaussian} & GAN & CNN & \multirow{5}*{vector} & 1024 \\
            & StyleGAN-XL \cite{Sauer2022StyleGANXLSS}  &  & GAN & CNN          & & 256 -- 1024 \\
            & StyleSwin \cite{Zhang2021StyleSwinTG}     &  & GAN& ViT          & & 256 / 1024 \\
            & BigGAN \cite{brock2018large}              &  & GAN & CNN          & & 256 \\
            & Diffusion-GAN \cite{Wang2022DiffusionGANTG} \  &  & hybrid & CNN          & & 1024 \\
            \midrule
	        \multirow{4}*{\textit{3D GAN}} 
	        & StyleNeRF \cite{gu2022stylenerf}          & \multirow{4}*{Gaussian} & \multirow{4}*{GAN} & NeRF $\oplus$ CNN   & \multirow{4}*{vector} & 256 -- 1024 \\
            & GIRAFFE-HD \cite{Xue2022GIRAFFEHA}        &  & & NeRF $\oplus$ CNN    & & 1024 \\
            & StyleSDF \cite{OrEl2021StyleSDFH3}        &  & & SDF $\oplus$ CNN    & & 512 / 1024 \\
            & EG3D \cite{Chan2021EfficientG3}           &  & & TriPl $\oplus$ CNN    & & 512 \\
			\midrule
			\multirow{1}*{\textit{VAE}} 
			& NVAE \cite{VahdatK20} & Gaussian & ELBO & CNN & spatial & 256 \\
			\bottomrule
		\end{tabular}
	\end{adjustbox}
\end{table}

\section{Additional Results for Plug-and-Play Guidance}
\label{app:more-coverage}

Seen in Table~\ref{tab:models-included} is a summary of generative models unified as deterministic mappings in this paper. Different models have different training objectives, model architectures, and structures of ``latent code''\footref{footnote:latent-code}. Most of the listed models are included in our experiments. Table~\ref{tab:fine-clip-1} and Table~\ref{tab:fine-clip-2} provide a more detailed version of the results (for some generative models) seen in Figure~\ref{fig:clip-curves}. Specifically, we investigated different configurations of various diffusion models and GANs. In Figure~\ref{fig:individual}, we provide several image samples for ID-controlled sampling from pre-trained generative models. Consistent with Table~\ref{tab:id-results}, diffusion models have better coverage of individuals than 2D/3D GANs. 

\nocite{Cao2022ASO}

\nocite{Yang2022DiffusionMA}

\nocite{Croitoru2022DiffusionMI}

\begin{table}[!th]
    \caption{CLIP experiments of models that have different configurations. Numbers under each model stand for the image resolution; \textit{trunc.} $\phi=0.7$ stands for the truncation trick \cite{Karras2019ASG} with truncation coefficient $\phi=0.7$. The reported metric is the CLIP score (larger is better), the same as Figure~\ref{fig:clip-curves}. $^\heartsuit$ and $^\spadesuit$ stand for the configuration plotted in Figure~\ref{fig:clip-curves}. }\smallskip
    \label{tab:fine-clip-1}
    \centering
    \begin{adjustbox}{width=\linewidth}
    \begin{tabular}{@{}lcccccccccc@{}}
        \toprule
        \multicolumn{1}{c}{Text $\controlcolor{\bm{t}}$ (Figure~\ref{fig:clip-curves})} & \multicolumn{5}{c}{\baby$^\heartsuit$} & \multicolumn{5}{c}{\oldperson$^\spadesuit$} \\
        \cmidrule(r){1-1}\cmidrule(lr){2-6}\cmidrule(l){7-11}
        \multicolumn{1}{c}{Control strength $\lambda_{\text{CLIP}}$} & 100 & 300 & 500 & 700 & 1000 & 100 & 300 & 500 & 700 & 1000 \\
        \midrule
        LDM-DDIM ($\eta = 0$) \\
        \quad 256 ($T_{\textit{g}} = 10$)$^{\heartsuit\spadesuit}$ & 0.258 	& 0.276 	& 0.283 	& 0.288 	& 0.290  & 0.269 	& 0.283 	& 0.296 	& 0.300     & 0.308 \\
        \quad 256 ($T_{\textit{g}} = 5$) & 0.257 	& 0.280 	& 0.283 	& 0.285 	& 0.287  & 0.269 	& 0.283 	& 0.292 	& 0.298 	& 0.304  \\
        DiffAE \\
        \quad 256 ($T_{\textit{g}} = 10$)$^{\heartsuit\spadesuit}$ & 0.266 	& 0.287 	& 0.291 	& 0.294 	& 0.294 	& 0.270 	& 0.296 	& 0.307 	& 0.314 	& 0.319  \\
        \quad 128 ($T_{\textit{g}} = 3$) & 0.259 	& 0.284 	& 0.289 	& 0.290 	& 0.292 	& 0.256 	& 0.271 	& 0.286 	& 0.293 	& 0.298  \\
        \quad 128 ($T_{\textit{g}} = 3$, $\latentcolor{\boldz_T}$ only) & 0.256 	& 0.289 	& 0.289 	& 0.290 	& 0.295 	& 0.256 	& 0.270 	& 0.285 	& 0.293 	& 0.297 \\
        StyleGAN2 \\
        \quad 1024$^{\heartsuit\spadesuit}$ & 0.273 		& 0.293 		& 0.296 		& 0.296 		& 0.298 		& 0.275 		& 0.302 		& 0.308 		& 0.311 		& 0.312 \\
        \quad 1024 (\textit{trunc.} $\phi=0.7$)      & 0.267 		& 0.287 		& 0.291 		& 0.293 		& 0.293 		& 0.267 		& 0.291 		& 0.299 		& 0.301 		& 0.303 \\
        StyleGAN-XL \\
        \quad 1024$^{\heartsuit\spadesuit}$ & 0.270		& 0.291		& 0.294		& 0.295		& 0.295    & 0.273 		& 0.299 		& 0.308 		& 0.312 		& 0.313 \\
        \quad 1024 (\textit{trunc.} $\phi=0.7$)      & 0.263 		& 0.283 		& 0.287 		& 0.289 		& 0.290 		& 0.265 		& 0.284 		& 0.292 		& 0.295 		& 0.297 \\
        \quad 512 (\textit{trunc.} $\phi=0.7$)       & 0.263 		& 0.282 		& 0.286 		& 0.288 		& 0.289 		& 0.263 		& 0.284 		& 0.293 		& 0.296 		& 0.300 \\
        \quad 256 (\textit{trunc.} $\phi=0.7$)       & 0.262 		& 0.281 		& 0.284 		& 0.287 		& 0.289 		& 0.259 		& 0.281 		& 0.291 		& 0.295 		& 0.299 \\
        StyleSwin \\
        \quad 1024$^{\heartsuit\spadesuit}$ & 0.266 		& 0.279 		& 0.278 		& 0.276 		& 0.268 		& 0.273 		& 0.291 		& 0.296 		& 0.295 		& 0.294 \\
        \quad 256                           & 0.262 		& 0.282 		& 0.283 		& 0.283 		& 0.281 		& 0.267 		& 0.285 		& 0.290 		& 0.293 		& 0.293 \\
        \quad 1024 (\textit{trunc.} $\phi=0.7$)      & 0.265 		& 0.284 		& 0.287 		& 0.288 		& 0.288 		& 0.264 		& 0.279 		& 0.292 		& 0.297 		& 0.300 \\
        \quad 256 (\textit{trunc.} $\phi=0.7$)       & 0.259 		& 0.278 		& 0.281 		& 0.275 		& 0.273 		& 0.261 		& 0.276 		& 0.281 		& 0.284 		& 0.281 \\
        Diffusion-GAN \\
        \quad 1024$^{\heartsuit\spadesuit}$ & 0.278 		& 0.295 		& 0.298 		& 0.298 		& 0.299 		& 0.270 		& 0.297 		& 0.305 		& 0.307 		& 0.308 \\
        \quad 1024 (\textit{trunc.} $\phi=0.7$)      & 0.273 		& 0.294 		& 0.294 		& 0.300 		& 0.301 		& 0.262 		& 0.286 		& 0.300 		& 0.305 		& 0.309 \\
        StyleNeRF \\
        \quad 1024 & 0.246 		& 0.271 		& 0.283 		& 0.288 		& 0.291 		& 0.264 		& 0.291 		& 0.303 		& 0.307 		& 0.311 \\
        \quad 256                           & 0.234 		& 0.240 		& 0.247 		& 0.252 		& 0.259 		& 0.260 		& 0.275 		& 0.291 		& 0.298 		& 0.303 \\
        \quad 1024 (\textit{trunc.} $\phi=0.7$)      & 0.243 		& 0.266 		& 0.277 		& 0.283 		& 0.287 		& 0.260 		& 0.280 		& 0.291 		& 0.296 		& 0.300 \\
        \quad 256 (\textit{trunc.} $\phi=0.7$)       & 0.229 		& 0.235 		& 0.239 		& 0.243 		& 0.249 		& 0.255 		& 0.267 		& 0.282 		& 0.290 		& 0.295 \\
        StyleSDF \\
        \quad 1024$^{\heartsuit\spadesuit}$ & 0.275 		& 0.288 		& 0.286 		& 0.282 		& --    		& 0.270 		& 0.290 		& 0.292 		& 0.291 		& -- \\
        \quad 1024 (\textit{trunc.} $\phi=0.7$)      & 0.267 		& 0.283 		& 0.284 		& 0.279 		& --    		& 0.261 		& 0.270 		& 0.273 		& 0.273 		& -- \\ 
        EG3D \\
        \quad 512$^{\heartsuit\spadesuit}$  & 0.277 		& 0.292 		& 0.294 		& 0.295 		& 0.294 		& 0.272 		& 0.298 		& 0.305 		& 0.307 		& 0.310 \\
        \quad 512 (\textit{trunc.} $\phi=0.7$)       & 0.277 		& 0.270 		& 0.276 		& 0.280 		& 0.283 		& 0.265 		& 0.285 		& 0.293 		& 0.296 		& 0.298 \\
        \bottomrule
    \end{tabular}
    \end{adjustbox}
\end{table}

\begin{figure}[!th]
\centering
    \includegraphics[width=\linewidth]{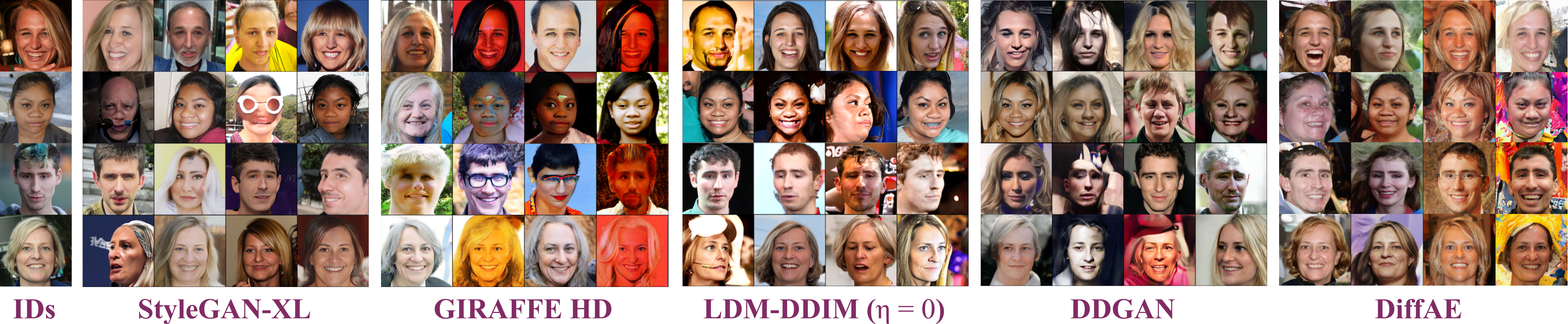}
\caption{\label{fig:individual} Image samples for the face ID experiment in Table~\ref{tab:id-results}.} 
\end{figure}

\clearpage

\begin{table}[!th]
    \caption{CLIP experiments of models that have different configurations. Numbers under each model stand for the image resolution; \textit{trunc.} $\phi=0.7$ stands for the truncation trick \cite{Karras2019ASG} with truncation coefficient $\phi=0.7$. The reported metric is the CLIP score (larger is better), the same as Figure~\ref{fig:clip-curves}. $^\heartsuit$ and $^\spadesuit$ stand for the configuration plotted in Figure~\ref{fig:clip-curves}. }\smallskip
    \label{tab:fine-clip-2}
    \centering
    \begin{adjustbox}{width=\linewidth}
    \begin{tabular}{@{}lcccccccccc@{}}
        \toprule
        \multicolumn{1}{c}{Text $\controlcolor{\bm{t}}$ (Figure~\ref{fig:clip-curves})} & \multicolumn{5}{c}{\eyeglasses$^\heartsuit$} & \multicolumn{5}{c}{\eyeglasses\yellowhat$^\spadesuit$} \\
        \cmidrule(r){1-1}\cmidrule(lr){2-6}\cmidrule(l){7-11}
        \multicolumn{1}{c}{Control strength $\lambda_{\text{CLIP}}$} & 100 & 300 & 500 & 700 & 1000 & 100 & 300 & 500 & 700 & 1000 \\
        \midrule
        LDM-DDIM ($\eta = 0$) \\
        \quad 256 ($T_{\textit{g}} = 10$)$^{\heartsuit\spadesuit}$ & 0.275 	& 0.290 	& 0.297 	& 0.301 	& 0.307 	& 0.252 	& 0.288 	& 0.312 	& 0.326 	& 0.343  \\
        \quad 256 ($T_{\textit{g}} = 5$) & 0.273 	& 0.289 	& 0.300 	& 0.305 	& 0.310 	& 0.250 	& 0.288 	& 0.315 	& 0.329 	& 0.343 \\
        DiffAE \\
        \quad 256 ($T_{\textit{g}} = 10$)$^{\heartsuit\spadesuit}$ & 0.275 	& 0.290 	& 0.297 	& 0.303 	& 0.307 	& 0.250 	& 0.287 	& 0.308 	& 0.320 	& 0.328 \\
        \quad 128 ($T_{\textit{g}} = 3$) & 0.265 	& 0.281 	& 0.288 	& 0.293 	& 0.298 	& 0.240 	& 0.273 	& 0.300 	& 0.314 	& 0.326 \\
        \quad 128 ($T_{\textit{g}} = 3$, $\latentcolor{\boldz_T}$ only) & 0.263 	& 0.280 	& 0.288 	& 0.291 	& 0.296 	& 0.240 	& 0.275 	& 0.300 	& 0.313 	& 0.324  \\
        StyleGAN2 \\
        \quad 1024$^{\heartsuit\spadesuit}$ & 0.279 		& 0.294 		& 0.300 		& 0.303 		& 0.304 		& 0.264 		& 0.299 		& 0.313 		& 0.319 		& 0.321 \\
        \quad 1024 (\textit{trunc.} $\phi=0.7$)      & 0.278 		& 0.291 		& 0.297 		& 0.300 		& 0.303 		& 0.255 		& 0.286 		& 0.303 		& 0.311 		& 0.316 \\
        StyleGAN-XL        & \\
        \quad 1024$^{\heartsuit\spadesuit}$ & 0.282 		& 0.299 		& 0.306 		& 0.310 		& 0.310 		& 0.260 		& 0.296 		& 0.301 		& 0.304 		& 0.305  \\
        \quad 1024 (\textit{trunc.} $\phi=0.7$)      & 0.281 		& 0.297 		& 0.303 		& 0.305 		& 0.309 		& 0.253 		& 0.279 		& 0.287 		& 0.288 		& 0.291 \\
        \quad 512 (\textit{trunc.} $\phi=0.7$)       & 0.280 		& 0.296 		& 0.301 		& 0.305 		& 0.307 		& 0.250 		& 0.282 		& 0.296 		& 0.300 		& 0.303 \\
        \quad 256 (\textit{trunc.} $\phi=0.7$)       & 0.275 		& 0.290 		& 0.297 		& 0.299 		& 0.303 		& 0.251 		& 0.283 		& 0.295 		& 0.300 		& 0.304 \\
        StyleSwin \\
        \quad 1024$^{\heartsuit}$           & 0.276 		& 0.282 		& 0.284 		& 0.281 		& 0.278 		& 0.251 		& 0.263 		& 0.258 		& 0.255 		& 0.247 \\
        \quad 256$^{\spadesuit}$            & 0.273 		& 0.281 		& 0.285 		& 0.284 		& 0.281 		& 0.256 		& 0.277 		& 0.281 		& 0.277 		& 0.275 \\
        \quad 1024 (\textit{trunc.} $\phi=0.7$)      & 0.276 		& 0.284 		& 0.286 		& 0.290 		& 0.288 		& 0.243 		& 0.263 		& 0.274 		& 0.277 		& 0.275 \\
        \quad 256 (\textit{trunc.} $\phi=0.7$)       & 0.272 		& 0.280 		& 0.281 		& 0.282 		& 0.281 		& 0.248 		& 0.267 		& 0.275 		& 0.274 		& 0.269 \\
        Diffusion-GAN \\
        \quad 1024$^{\heartsuit\spadesuit}$ & 0.278 		& 0.294 		& 0.301 		& 0.303 		& 0.306 		& 0.262 		& 0.288 		& 0.298 		& 0.301 		& 0.302 \\
        \quad 1024 (\textit{trunc.} $\phi=0.7$)      & 0.277 		& 0.291 		& 0.300 		& 0.291 		& 0.308 		& 0.249 		& 0.279 		& 0.289 		& 0.296 		& 0.301 \\
        StyleNeRF \\
        \quad 1024 & 0.268 		& 0.277 		& 0.282 		& 0.285 		& 0.287 		& 0.238 		& 0.252 		& 0.262 		& 0.272 		& 0.281 \\
        \quad 256                           & 0.264 		& 0.272 		& 0.277 		& 0.280 		& 0.283 		& 0.235 		& 0.244 		& 0.252 		& 0.256 		& 0.263 \\
        \quad 1024 (\textit{trunc.} $\phi=0.7$)      & 0.268 		& 0.276 		& 0.281 		& 0.284 		& 0.286 		& 0.233 		& 0.244 		& 0.253 		& 0.261 		& 0.270 \\
        \quad 256 (\textit{trunc.} $\phi=0.7$)       & 0.264 		& 0.271 		& 0.277 		& 0.280 		& 0.282 		& 0.232 		& 0.238 		& 0.246 		& 0.251 		& 0.255 \\
        StyleSDF \\
        \quad 1024$^{\heartsuit\spadesuit}$ & 0.275 		& 0.278 		& 0.273 		& --    		& --    		& 0.253 		& 0.259 		& --    		& --    		& -- \\
        \quad 1024 (\textit{trunc.} $\phi=0.7$)      & 0.273 		& 0.279 		& 0.275 		& --    		& --    		& 0.242 		& 0.252 		& 0.248 		& --    		& -- \\ 
        EG3D \\
        \quad 512$^{\heartsuit\spadesuit}$  & 0.284 		& 0.297 		& 0.303 		& 0.305 		& 0.308 		& 0.257 		& 0.284 		& 0.287 		& 0.287 		& 0.281 \\
        \quad 512 (\textit{trunc.} $\phi=0.7$)       & 0.282 		& 0.295 		& 0.300 		& 0.301 		& 0.305 		& 0.246 		& 0.268 		& 0.276 		& 0.278 		& 0.276 \\
        \bottomrule
    \end{tabular}
    \end{adjustbox}
\end{table}

\section{Societal Impact}
\label{app:societal-impact}

In general, improved generative modeling makes it easier to generate fake media  (e.g., DeepFakes; \citealp{westerlund2019emergence,Vaccari2020DeepfakesAD}) and privacy leaks (e.g., identity-conditioned human face synthesis, information leaks from large-scale pre-training data of text-to-image diffusion models). Additionally, in the particular case of this paper, one could encounter biases image editing as a result of applying \cyclediff to text-to-image diffusion models that reflect the natural biases in large text-image pre-training data. On the other hand, improved generative modeling can bring benefits to synthesis of humans and new ways of human communication in AR/VR. Moreover, we point out that there exist many current research works and tools that can efficiently detect fake media or can manage privacy leaks during pre-training. We encourage researchers and practitioners to consider these risks and remedies when using the methods developed in this paper.

\end{document}